\theoremstyle{plain}
\newtheorem{theorem}{Theorem}[section]
\newtheorem{proposition}[theorem]{Proposition}
\newtheorem{lemma}[theorem]{Lemma}
\newtheorem{corollary}[theorem]{Corollary}
\theoremstyle{definition}
\newtheorem{definition}[theorem]{Definition}
\newtheorem{assumption}[theorem]{Assumption}
\theoremstyle{remark}
\newtheorem{remark}[theorem]{Remark}
\definecolor{paleyellow}{RGB}{255,255,224}
\icmltitlerunning{FedCEO: Flexible Differentially Private Federated Learning}
\begin{document}

\twocolumn[
\icmltitle{Clients Collaborate: Flexible Differentially Private Federated Learning with Guaranteed Improvement of Utility-Privacy Trade-off}




\begin{icmlauthorlist}
\icmlauthor{Yuecheng Li}{sysucs}
\icmlauthor{Lele Fu}{sysucs}
\icmlauthor{Tong Wang}{tamee}
\icmlauthor{Chuan Chen}{sysucs}
\icmlauthor{Jian Lou}{sysusw}
\icmlauthor{Bin Chen}{hit}
\icmlauthor{Lei Yang}{sysucs}
\icmlauthor{Jian Shen}{zjst}
\icmlauthor{Zibin Zheng}{sysusw}
\end{icmlauthorlist}

\icmlaffiliation{sysucs}{Sun Yat-sen University, Guangzhou, China}
\icmlaffiliation{tamee}{Texas A\&M University, Texas, USA}
\icmlaffiliation{hit}{Harbin Institute of Technology, Shenzhen, China}
\icmlaffiliation{zjst}{Zhejiang Sci-Tech University, China}
\icmlaffiliation{sysusw}{Sun Yat-sen University, Zhuhai, China}
\icmlcorrespondingauthor{Chuan Chen}{chenchuan@mail.sysu.edu.cn}

\icmlkeywords{Federated Learning, Differential Privacy, Utility-Privacy Trade-off}

\vskip 0.3in
]



\printAffiliationsAndNotice{}  

\begin{abstract}
\label{sec:abs}To defend against privacy leakage of user data, differential privacy is widely used in federated learning, but it is not free. The addition of noise randomly disrupts the semantic integrity of the model and this disturbance accumulates with increased communication rounds. In this paper, we introduce a novel federated learning framework with rigorous privacy guarantees, named \textbf{FedCEO}, designed to strike a trade-off between model utility and user privacy by letting clients ``\emph{\textbf{C}ollaborate with \textbf{E}ach \textbf{O}ther}''. Specifically, we perform efficient tensor low-rank proximal optimization on stacked local model parameters at the server, demonstrating its capability to \emph{flexibly} truncate high-frequency components in spectral space. This capability implies that our FedCEO can effectively recover the disrupted semantic information by smoothing the global semantic space for different privacy settings and continuous training processes. Moreover, we improve the SOTA utility-privacy trade-off bound by order of $\sqrt{d}$, where $d$ is the input dimension. We illustrate our theoretical results with experiments on representative datasets and observe significant performance improvements and strict privacy guarantees under different privacy settings. The \textbf{code} is available at \url{https://github.com/6lyc/FedCEO_Collaborate-with-Each-Other}.
\end{abstract}

\section{Introduction} \label{sec:intro}
Federated learning (FL) \cite{mcmahan2017fedavg}, a privacy-preserving distributed machine learning paradigm, enables multiple parties to jointly learn a model under global scheduling while keeping the data from leaving the local client. Nevertheless, recent work has shown that an attacker (i.e., the server or a particular client) can steal raw training data \cite{zhu2019dlg, jeon2021gigan} and even specific private information \cite{fowl2022robthefed} by inverting parameters (gradients) uploaded from other clients. To further enhance privacy safeguards, differential privacy (DP) \cite{dwork2006DP, dwork2010user-levelDP} has become the prevailing standard in privacy-preserving machine learning \cite{jain2018DPMC, levy2021userDPLearning}. This privacy computing technique has proven effective in federated learning, guarding against client (user) privacy breaches by introducing random noise to uploaded updates \cite{brendan2018DPRNN, jain2021DPpersonal, wei2023personalizedFLDP, liu2023KDDDPFL}. Unfortunately, randomized mechanisms like DP may result in a sacrifice of model utility, especially as the number of communication rounds increases \cite{yuan2023amplitudeDP}. Overall, the key issue is how to achieve an improved utility-privacy trade-off in differentially private federated learning (DPFL), which constitutes a novel and challenging research direction \cite{bietti2022PPSGD, cheng2022regularizationDP, shen2023DPFRL, tsoy2024provable}.

\begin{figure*}[ht]
\begin{center}
\centerline{\includegraphics[width=0.96\textwidth]{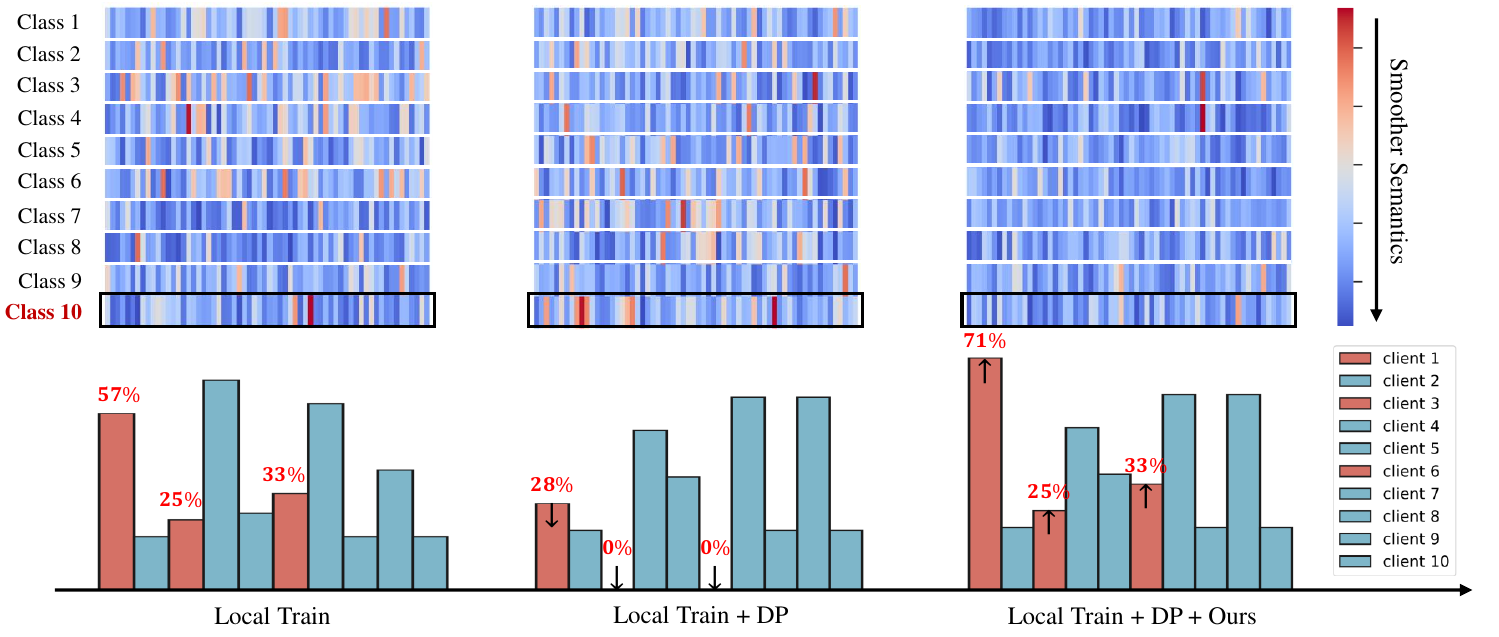}}
\caption{The heat map illustrates the smoothness of the global semantic space for ten clients at different training stages, where the color gradient from {\color{red} red} to {\color{blue} blue} signifies an increase in smoothness. The bar chart illustrates the testing accuracy on the tenth class for each client during different training stages, where clients experiencing significant degradation in the semantic understanding of the tenth class are highlighted in {\color{red! 60} pink}, while others are marked in {\color{green!60!black} green}. Due to the randomness of DP noise, we observe significant degradation in the local semantic representation of $10$-th class for clients 1, 3, and 6 among the ten clients (ACC significantly reduced), while the impact on other clients is relatively minor. Consequently, the corresponding blocks in the tenth row of the heat map matrix turn red, indicating a decrease in the smoothness of the global semantic space. In contrast, our approach enhances the smoothness of the global semantic space, evidenced by the recovery of the blue color in the tenth row of the heat map matrix. Simultaneously, the local semantic representation of class ten for clients 1, 3, and 6 is restored (ACC improved), based on the collaboration among all clients.}
\label{g1}
\end{center}
\end{figure*}

Considering the randomness of the introduced noise for differential privacy, the specific semantic information that gets corrupted can differ across individual clients. Meanwhile, there exists a certain similarity between the data distributions of each client, leading to the correlation in their semantic spaces. Therefore, we propose to mitigate the impact of DP noise on the utility of the global model in federated learning by exploiting the semantic complementarity between the noisy parameters of different clients. To substantiate our motivation, we visualize the smoothness of the global semantic space at different stages of training (see the heat map in Figure \ref{g1}). Specifically, we compute Laplacian regularization values for the last linear layer of the backbone along the client-side direction, which serves as a metric for assessing the smoothness of the global semantic space \cite{yin2015laplacianR, pang2017LaplacianSmooth}. Furthermore, we present the testing accuracy for each client concerning the $10$-th class on the CIFAR-10 dataset with a two-layer multi-layer perceptron (MLP) (see the bar chart Figure \ref{g1}). Taking the $10$-th class in CIFAR-10 as an illustrative example, we observe that the introduction of differential privacy disrupts the smoothness of the global semantic space for the tenth class (as evidenced by the reddening of the tenth row). Simultaneously, there is a significant decline in testing accuracy for this class across clients 1, 3, and 6. Following our proposed low-rank processing, we observe an enhancement in the smoothness of the global semantic space for the tenth class (as evidenced by the bluing of the tenth row), concomitant with an increase in accuracy for clients 1, 3, and 6. Moreover, a similar phenomenon is noted for the seventh class (as observed in the seventh row of the heat map). This emphasizes that smoothing the global semantic space based on the semantic complementarity of noise parameters between clients is key to addressing the declining utility of models in the DPFL framework. While previous works have also focused on improving utility in DPFL, they are fundamentally based on restricting the local updates \cite{cheng2022regularizationDP, shen2023DPFRL} without considering the collaborative relationship across different clients. In other words, our work provides a \textbf{new perspective} on the utility-privacy trade-off in federated learning by exploring an inter-client semantic collaboration approach. 

\textbf{Contributions.} In this work, we propose an optimization algorithm in the server based on tensor low-rank techniques, which offers a \emph{flexible} approach to smoothing the global semantic space for \emph{different privacy settings} and \emph{continuous training processes}. We provide rigorous theoretical analysis and extensive empirical evidence to substantiate our proposed framework. Specifically, our contributions are summarized as follows: 

\begin{enumerate}
    \item We introduce a novel federated learning framework, \textbf{FedCEO}, characterized by stringent privacy guarantees and enhanced model utility. We establish its equivalence in the spectral space to the truncated tensor singular value decomposition (T-tSVD) algorithm \cite{kilmer2011T-tSVD}, showing its ability to achieve smoothness by truncating high-frequency components in the global semantic space. Benefiting from the T-tSVD, we can flexibly control the degree of semantic complementarity between clients according to noise levels.
    \item We theoretically prove a new utility-privacy tradeoff bound for our FedCEO yielding a notable improvement of $O(\sqrt{d})$ over previous SOTA results due to low-rankness, where $d$ is the input dimension.
    \item We empirically demonstrate that our model utility outperforms previous work under various model architectures and privacy settings. Furthermore, we also employ a common gradient inversion algorithm named \textit{DLG} \cite{zhu2019dlg} to attack our framework, validating its robust privacy-preserving performance. In summary, we demonstrate that our approach achieves the best trade-off between utility and privacy within the framework of DPFL.
\end{enumerate}

\subsection{Related Work} \label{sec: related}
To ensure formal privacy guarantees, federated learning with differential privacy has undergone extensive investigation \cite{kim2021LDP-SGD, DBLP:conf/ndss/LDPandCDP}. Recent efforts have been concentrated on user-level differential privacy, aiming to enhance model utility \cite{brendan2018DPRNN, wei2021user, shen2023DPFRL}. In alignment with these endeavors, we also adopt user-level differential privacy and employ it locally to safeguard against potential adversarial threats to the server \cite{fowl2022robthefed, hu2024does}. Moreover, to further enhance the model utility of DPFL, current methods mainly investigate techniques such as regularization or personalization, fundamentally constraining the size of uploaded updates. \cite{cheng2022regularizationDP} proposes two regularization techniques: bounded local update regularization and local update sparsification. These methods enforce constraints to reduce the norm of local updates. In a more natural paradigm, PPSGD \cite{bietti2022PPSGD} introduces a personalized privacy-preserving stochastic gradient optimization algorithm designed for training additive models with user-level differential privacy. It decomposes the model into the sum of local and global learning components, selectively sharing only the global part. However, extending this method to more complex personalized models proves challenging. Work closely related to ours are \cite{jain2021DPpersonal} and CENTAUR \cite{shen2023DPFRL}, who perform singular value decomposition (SVD) on noisy representation matrices of clients individually at the server. They verify that handling such issues in the spectral space is promising. However, their methods independently explore spectral information. In contrast, we stack the noisy models into a higher-order tensor, capitalizing on the semantic complementarity among clients to further enhance model utility in DPFL. Consequently, we improve the utility-privacy trade-off from their $O(d^{1.5})$ and $O(d)$ to our $O(\sqrt{d})$.

\section{Preliminaries}
\label{sec: prel}

\subsection{Federated Learning}
Federated learning (FL) \cite{mcmahan2017fedavg} is a multi-round protocol involving a central server and a set of local clients collaboratively training a privacy-preserving global model. In the federated learning, we address the following formulation of the optimization problem:
\begin{equation}
\min_{w \in R^{d}}\left\{f(w):=\frac{1}{K} \sum_{k \in S_t} f_k\left(w\right)\right\},
\end{equation}
where $N$ is the total number of clients, with a selection of a subset $S_t$ of size $K$ in each round. Let $w$ denote the parameters of the global model, and $D_k$ denote the local dataset of client $k$. Let $f_k:=\mathbb{E}_{b \sim D_k}\left[F (w; b)\right]$ denote the empirical risk for client $k$, where $b$ represents a randomly sampled mini-batch from the local dataset $D_k$.

To achieve the goal of collaborative training without exposing local data, federated learning employs parameter transmission followed by aggregation. However, FL introduces challenges not encountered in centralized learning, such as gradient conflicts arising from data heterogeneity and privacy risks due to gradient leakage. Our paper primarily focuses on privacy concerns in FL and explores further utility guarantees.

\subsection{Differential Privacy}
Differential Privacy (DP) is a privacy computing technique with formal mathematical guarantees. We commence by presenting the definition of base DP.

\begin{definition} [\textbf{Differential Privacy \cite{dwork2006DP, DPDL}}] 
\label{DefUDP}
    A randomized mechanism $\mathcal{M}: \mathcal{D} \rightarrow \mathcal{R}$ with data domain $\mathcal{D}$ and output range $\mathcal{R}$ gives $(\epsilon, \delta)$-differential privacy if for any two adjacent datasets $D, D^{\prime} \in \mathcal{D}$ that differ by \emph{one record} (add or remove) and all outputs $S \subseteq \mathcal{R}$ it holds that
    $$
    \operatorname{Pr}[\mathcal{M}(D) \in S] \leq e^{\epsilon} \operatorname{Pr}\left[\mathcal{M}\left(D^{\prime}\right) \in S\right]+\delta ,
    $$
    where $\epsilon$ and $\delta$ denote the privacy budget and the relaxation level, respectively. It indicates the hardness of obtaining information about one record in the dataset by observing the output of the algorithm $\mathcal{M}$, especially when the privacy budget $\epsilon$ is smaller.
\end{definition}

\textbf{User-level DP}: In this paper, we employ \textbf{user-level differential privacy}, which naturally suits federated learning, shifting the protected scale to individual user \cite{dwork2010user-levelDP, brendan2018DPRNN}. In other words, \emph{one record} in Definition \ref{DefUDP} consists of all data belonging to a single client.


\begin{algorithm}[tb]
   \caption{UDP-FedAvg}
   \label{alg:alg1}
   \begin{algorithmic}
   \STATE {\bfseries Input:} $K$: number of participating clients each round, \\ $T$: communication rounds, $C$: clipping threshold, \\ $\sigma$: noise multiplier, $\eta$: learning rate, $E$: local epochs.
   \STATE {\bfseries Output:} $w^{\prime}(T)$: global model.
   \end{algorithmic}
   \begin{algorithmic}[1]
   \STATE Initialize global model $w(0)$ randomly 
   \FOR{ $t=1$ \textbf{to} $T$ }
   \STATE Take a random subset $S_t$ of $K$ clients (Total $N$)
   \FOR{all clients $k$ \textbf{in parallel}}
   \STATE $w_{k}^{\prime}(t)=$ \textbf{ClientDPUpdate} $\left(w^{\prime}\left(t-1\right), k\right)$
   \ENDFOR
   \STATE $w^{\prime}(t) = \frac{1}{K} \sum_{k \in S_t} {w_{k}^{\prime}(t)}$
   \STATE return $w^{\prime}(t)$
   \ENDFOR
   \STATE return $w^{\prime}(T)$
   \end{algorithmic}
   \begin{algorithmic}
       \STATE
       \STATE \textbf{ClientDPUpdate}($w_0, k$)
   \end{algorithmic}
   
   \begin{algorithmic}[1]
   \STATE Initialize $w = w_0$ (i.e. $w^{\prime}\left(t-1\right)$)
   \FOR{ $e=1$ \textbf{to} $E$ }
   \STATE Take a random split $\mathcal{B}$ from local dataset $D_k$ (i.e. mini-batches with size $B$)
   \FOR{batch $b \in \mathcal{B}$ }
   \STATE $w = w-\eta\frac{1}{B}\left(\sum_{(\boldsymbol{x, y}) \in b} \nabla_{w} F\left(w; \boldsymbol{x, y}\right)\right)$ 
   \ENDFOR
   \ENDFOR
   \STATE $\Delta = w-w_0$
   \STATE $\Tilde{\Delta} = \textbf{GradientClip}(\Delta)$
   \STATE \colorbox{blue!5!white}{$w^{\prime} = w_0 + \eta\left(\Tilde{\Delta} + \mathcal{N}(0, \boldsymbol{I}\sigma^{2}C^{2}/K)\right)$}
   \STATE return $w^{\prime}$
   \end{algorithmic}

   \begin{algorithmic}
       \STATE 
       \STATE {\textbf{GradientClip}($\Delta$)} 
   \end{algorithmic}

   \begin{algorithmic}[1]
       \STATE $\Tilde{\Delta} = \Delta / \max \left(1, \frac{\|\Delta\|}{C}\right)$
       \STATE return $\Tilde{\Delta}$
   \end{algorithmic}

\end{algorithm}

\subsection{Differential Privacy in Federated Learning}
User-level differential privacy provides formal privacy guarantees for individual clients and has found widespread applications in federated learning. \cite{brendan2018DPRNN} first introduces DP-FedAvg and DP-FedSGD, formally incorporating user-level DP into the FL framework. These methods involve \emph{clipping} per-user updates at the client side, followed by aggregation and the addition of \emph{Gaussian noise} at the server side, which ensures privacy protection for "large step" updates derived from user-level data. To further address potential malicious actions at the server \cite{fowl2022robthefed}, we introduce user-level differential privacy (UDP) locally \cite{truex2020ldpfed}, as shown in Algorithm \ref{alg:alg1}. Although the aforementioned algorithms provide user-level privacy guarantees in federated learning, they often result in a significant degradation of model utility. In our work, we specifically focus on strategies to enhance the utility of FL models while maintaining stringent privacy guarantees.

\subsection{Low-rankness}
The low-rank property is a crucial characteristic of both natural and artificial data, offering a description of dependence relationships among different dimensions. For instance, in the case of a second-order matrix, low-rankness characterizes the correlation between elements in rows or columns, finding extensive applications in areas such as image denoising \cite{ren2022Lowrank_denoise} and data compression \cite{idelbayev2020Lowrank_compress}. This paper focuses primarily on the low-rank property of third-order tensors, employing tensor nuclear norm (TNN) \cite{yang2016TNN, lu2016TNN} for characterization and the tensor singular value decomposition (tSVD) \cite{kilmer2011T-tSVD, lu2019TRPCA} algorithm for modeling. The following sections provide a detailed introduction to these relevant concepts.

The Discrete Fourier Transform (DFT) is implicated in several related concepts introduced later in this paper. We denote discrete Fourier transform as $\operatorname{DFT(\cdot)}$, and more details are deferred to the \textbf{Appendix} \ref{app:DFT}.

For each third-order tensor $\boldsymbol{\mathcal{W}}$, we perform the discrete Fourier transform along its third dimension, denoted as $\overline{\boldsymbol{\mathcal{W}}} = \operatorname{DFT}(\boldsymbol{\mathcal{W}}, 3)$, where $\overline{\boldsymbol{W}}^{(i)}$ represents the $i$-th frontal slice of $\overline{\boldsymbol{\mathcal{W}}}$. Similarly, we have $\boldsymbol{\mathcal{W}} = \operatorname{IDFT}(\overline{\boldsymbol{\mathcal{W}}}, 3)$, indicating the inverse discrete Fourier transform along the third dimension.

\subsubsection{Tensor Nuclear Norm}
The Tensor Nuclear Norm (TNN) is often employed in optimization problems as a metric for the low-rank property of a tensor. A smaller nuclear norm indicates a lower rank for the tensor, implying stronger smoothness among its slices. 

\begin{definition} [\textbf{Tensor Nuclear Norm \cite{lu2016TNN}}] 
\label{def:tnn}
For each third-order tensor $\boldsymbol{\mathcal{W}} \in \mathbb{R}^{n_1 \times n_2 \times n_3}$, it defines the tensor nuclear norm, denoted as $\|\cdot\|_*$, as follows: 

$$\|\boldsymbol{\mathcal{W}}\|_*:=\frac{1}{n_3} \sum_{i=1}^{n_3}\left\|\overline{\boldsymbol{W}}^{(i)}\right\|_* .$$
\end{definition}
It means the average of the matrix nuclear norm of all the frontal slices of $\overline{\boldsymbol{\mathcal{W}}}$, where $\overline{\boldsymbol{\mathcal{W}}}$ is a tensor after DFT on $\boldsymbol{\mathcal{W}}$ along the third dimension.

\subsubsection{T-product and Tensor SVD}
The tensor singular value decomposition (tSVD) can be employed to approximate low-rank tensors. To do so, we first introduce the concept of the tensor-tensor product (t-product), denoted as $\boldsymbol{\mathcal{U}} * \boldsymbol{\mathcal{V}}$ \cite{kilmer2011T-tSVD}. More details are deferred to \textbf{Appendix} \ref{app:DefTproduct}.

Based on the t-product, we define tensor singular value decomposition (tSVD) as follows.
\begin{definition}[\textbf{Tensor Singular Value Decomposition \cite{kilmer2011T-tSVD, lu2019TRPCA}}]  
     For each third-order tensor $\boldsymbol{\mathcal{W}} \in \mathbb{R}^{n_1 \times n_2 \times n_3}$, it can be factored in 
    $$
    \boldsymbol{\mathcal{W}}=\boldsymbol{\mathcal{U}} * \boldsymbol{\mathcal{S}} * \boldsymbol{\mathcal{V}}^{H},
    $$
    where $\boldsymbol{\mathcal{U}} \in \mathbb{R}^{n_1 \times n_1 \times n_3}, \boldsymbol{\mathcal{V}} \in \mathbb{R}^{n_2 \times n_2 \times n_3}$ are orthogonal, i.e., $\boldsymbol{\mathcal{U}} * \boldsymbol{\mathcal{U}}^H=\boldsymbol{\mathcal{I}}$ and $\boldsymbol{\mathcal{V}} * \boldsymbol{\mathcal{V}}^H=\boldsymbol{\mathcal{I}}$, where $(\cdot)^H$ denotes conjugate transpose. $\boldsymbol{\mathcal{S}} \in \mathbb{R}^{n_1 \times n_2 \times n_3}$ is an $f$-diagonal tensor, whose each frontal slice is diagonal.
\end{definition}
Note that tSVD can be efficiently computed in the Fourier domain using matrix SVD, as detailed in \textbf{Appendix} \ref{app:altsvd}. Furthermore, by truncating smaller singular values (or by retaining the larger singular values), we can decompose the tensor into a lower-rank part, referred to as the truncated tSVD (T-tSVD). We defer its algorithm to \textbf{Appendix} \ref{app:alTtsvd}.

\begin{algorithm}[tb]
   \caption{FedCEO}
   \label{alg:alg2}
   \begin{algorithmic}
   \STATE {\bfseries Input:} $K$: number of participating clients each round, \\ $T$: communication rounds, $C$: clipping threshold, \\ $\sigma$: noise multiplier, $\eta$: learning rate, $E$: local epochs, \\ $I$: interval, $\lambda$: initial coefficient, $\vartheta$: common ratio.
   \STATE {\bfseries Output:} $w^{\prime}(T)$: global model. 
   \end{algorithmic}
   \begin{algorithmic}[1]
   \STATE Initialize global model $w(0)$ randomly 
   \FOR{ $t=1$ \textbf{to} $T$ }
   \STATE Take a random subset $S_t$ of $K$ clients (each client $k$ is selected with probability $p=\frac{K}{N}$)
   \FOR{all clients $k$ \textbf{in parallel}}
   \IF{$(t-1) \% I == 0$}
   \STATE $w_{k}^{\prime}(t)=$ \textbf{ClientDPUpdate} $\left(\hat{w}_{k}\left(t-1\right), k\right)$
   \ELSE
   \STATE $w_{k}^{\prime}(t)=$ \textbf{ClientDPUpdate} $\left(w^{\prime}\left(t-1\right), k\right)$
   \ENDIF
   \ENDFOR
   \IF{$t \% I == 0$}
   \STATE $\boldsymbol{\mathcal{W}_{\mathcal{N}}} = \operatorname{fold}\left(\left[w_{1}^{\prime}(t), \cdots, w_{K}^{\prime}(t)\right]^{T}\right)$
   \STATE \colorbox{blue!5!white}{$\boldsymbol{\mathcal{\hat{W}}} = \arg \min _{\boldsymbol{\mathcal{W}}}\left\{\lambda / \vartheta^{\frac{t}{I}} \|\boldsymbol{\mathcal{W}}-\boldsymbol{\mathcal{W}_{\mathcal{N}}}\|_F^2 + \|\boldsymbol{\mathcal{W}}\|_*\right\} $}
   \STATE $\left\{\hat{w}_{k}\left(t\right)\right\}^{K}_{k=1} = \operatorname{unfold}\left(\boldsymbol{\mathcal{\hat{W}}}\right)$
   \STATE return $\{\hat{w}_{k}\left(t\right)\}^{K}_{k=1}$
   \ELSE
   \STATE $w^{\prime}(t) = \frac{1}{K} \sum_{k \in S_t} {w_{k}^{\prime}(t)}$
   \STATE return $w^{\prime}(t)$
   \ENDIF
   \ENDFOR
   \STATE return $w^{\prime}(T)$
   \end{algorithmic}

\end{algorithm}

\section{Main Algorithm}
\subsection{FedCEO}
Our main algorithm, as illustrated in Algorithm \ref{alg:alg2}, represents a local version of the user-level differentially private federated learning framework, accompanied by a tensor low-rank proximal optimization acting on stacked noisy parameters at the server. The specific objective function is as follows:
\begin{equation}
\label{eq3.5}
\boldsymbol{\mathcal{\hat{W}}} = \arg \min _{\boldsymbol{\mathcal{W}}}\left\{\lambda / \vartheta^{\frac{t}{I}} \|\boldsymbol{\mathcal{W}}-\boldsymbol{\mathcal{W}_{\mathcal{N}}}\|_F^2 + \|\boldsymbol{\mathcal{W}}\|_*\right\}.
\end{equation}

Every $I$ rounds, we fold the noisy parameters uploaded by clients into a third-order tensor $\boldsymbol{\mathcal{W}_{\mathcal{N}}} \in \mathbb{R}^{d \times h \times K}$  where $d$ represents the input data dimension, $h$ denotes the network dimension, and $K$ signifies the number of clients selected in each round. Subsequently, we impose constraints on its low-rank structure using the previously introduced TNN, denoted as $\|\boldsymbol{\mathcal{W}}\|_*$. Furthermore, to prevent trivial solutions, we apply an offset regularization term based on the Frobenius norm, denoted as $\|\boldsymbol{\mathcal{W}}-\boldsymbol{\mathcal{W}_{\mathcal{N}}}\|_F^2$. It also serves as a proximal operator, ensuring the convergence of the optimal point $\boldsymbol{\mathcal{\hat{W}}}$ \cite{cai2010SVT}. In the next section, we prove that the constructed optimization objective in Eq. (\ref{eq3.5}) is equivalent to the T-tSVD algorithm with the adaptive soft-thresholding rule in Theorem \ref{them:them3.1}, where the truncation threshold is defined by a geometric series, denoted as $\frac{1}{2\lambda} \vartheta^{\frac{t}{I}}$. 

\subsection{Analysis}
To elucidate the role of our low-rank proximal optimization objective at the server, we introduce the following theorem.
\begin{theorem} [Interpretability]
    \label{them:them3.1}
    For each $\tau \geq 0$ and $\boldsymbol{\mathcal{W}_{\mathcal{N}}} \in \mathbb{R}^{d \times h \times K}$, our tensor low-rank proximal optimization objective defined in algorithm \ref{alg:alg2} obeys
    \begin{equation}
    \label{eq4}
        \operatorname{T-tSVD}(\boldsymbol{\mathcal{W_N}}, \frac{1}{2\tau})=\arg \min _{\boldsymbol{\mathcal{W}}}\left\{\tau \|\boldsymbol{\mathcal{W}}-\boldsymbol{\mathcal{W}_{\mathcal{N}}}\|_F^2 + \|\boldsymbol{\mathcal{W}}\|_*\right\} ,
    \end{equation}
    where $\operatorname{T-tSVD}(\cdot)$ is a truncated tSVD operator and $\frac{1}{2\tau}$ is the truncation threshold, defined as follows:
    $$
    \operatorname{T-tSVD}(\boldsymbol{\mathcal{W}}, \frac{1}{2\tau}):=\boldsymbol{\mathcal{U}} * \boldsymbol{\mathcal{D}} * \boldsymbol{\mathcal{V}}^H,
    $$
    where $\boldsymbol{\mathcal{D}}$ is an f-diagonal tensor whose each frontal slice in the Fourier domain is $\overline{\boldsymbol{D}}^{(i)}(j, j) = \max\{\overline{\boldsymbol{S}}^{(i)}(j, j) - \frac{1}{2\tau}, 0\}$, $j \leq \min(d, h), i=1, \cdots, K$.
\end{theorem}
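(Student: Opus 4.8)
The plan is to diagonalize the problem along the third mode with the DFT, where both the offset term and the TNN split into $K$ independent frontal-slice problems, each of which is the classical nuclear-norm proximal problem solved by singular value thresholding \cite{cai2010SVT}. First I would apply Parseval's identity for the (unnormalized) mode-3 DFT, which gives $\|\boldsymbol{\mathcal{W}}\|_F^2 = \frac{1}{K}\sum_{i=1}^{K}\|\overline{\boldsymbol{W}}^{(i)}\|_F^2$; since the transform is linear, the same holds for the difference, so $\|\boldsymbol{\mathcal{W}}-\boldsymbol{\mathcal{W}_{\mathcal{N}}}\|_F^2 = \frac{1}{K}\sum_{i=1}^{K}\|\overline{\boldsymbol{W}}^{(i)}-\overline{\boldsymbol{W}_{\mathcal{N}}}^{(i)}\|_F^2$. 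Combining this with Definition~\ref{def:tnn}, which already carries the factor $\frac{1}{K}$, the objective rewrites as
$$\tau\|\boldsymbol{\mathcal{W}}-\boldsymbol{\mathcal{W}_{\mathcal{N}}}\|_F^2 + \|\boldsymbol{\mathcal{W}}\|_* = \frac{1}{K}\sum_{i=1}^{K}\left(\tau\|\overline{\boldsymbol{W}}^{(i)}-\overline{\boldsymbol{W}_{\mathcal{N}}}^{(i)}\|_F^2 + \|\overline{\boldsymbol{W}}^{(i)}\|_*\right),$$
a sum of per-slice functionals coupled only through the common constant $\frac{1}{K}$.

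I would then minimize each summand independently. Each is exactly $\arg\min_X \tau\|X-Y\|_F^2 + \|X\|_*$ with $Y = \overline{\boldsymbol{W}_{\mathcal{N}}}^{(i)}$; rescaling to the standard form $\frac{1}{2}\|X-Y\|_F^2 + \frac{1}{2\tau}\|X\|_*$, the result of \cite{cai2010SVT} identifies the unique minimizer as the singular value thresholding of $Y$ at level $\frac{1}{2\tau}$: writing the (complex) SVD $Y = U\Sigma V^H$, the minimizer is $U\Sigma_\tau V^H$ with $\Sigma_\tau(j,j) = \max\{\Sigma(j,j) - \frac{1}{2\tau}, 0\}$. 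This is precisely the claimed slice-wise rule $\overline{\boldsymbol{D}}^{(i)}(j,j) = \max\{\overline{\boldsymbol{S}}^{(i)}(j,j) - \frac{1}{2\tau}, 0\}$. Applying the inverse mode-3 DFT and invoking the Fourier-domain characterization of tSVD (Appendix~\ref{app:altsvd}) reassembles the thresholded slices into $\boldsymbol{\mathcal{U}} * \boldsymbol{\mathcal{D}} * \boldsymbol{\mathcal{V}}^H = \operatorname{T-tSVD}(\boldsymbol{\mathcal{W}_{\mathcal{N}}}, \frac{1}{2\tau})$, which is the desired identity.

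The step I expect to require the most care is feasibility of this slice-wise minimizer for the original \emph{real-valued} tensor problem. The frontal slices of a real tensor are not arbitrary complex matrices: the mode-3 DFT forces a conjugate-symmetry relation pairing slice $i$ with slice $K-i+2$ (and making the first slice real), so the decoupled minimization is a priori unconstrained over a strictly larger complex set. I must verify that the SVT minimizer nonetheless satisfies this symmetry, and hence comes from a genuine real tensor. This holds because singular value thresholding commutes with complex conjugation — conjugating $Y$ conjugates $U$ and $V$ while leaving the real singular values fixed — so the thresholded slices inherit the conjugate symmetry of $\overline{\boldsymbol{\mathcal{W}_{\mathcal{N}}}}$. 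Feasibility secured, the unconstrained slice-wise optimum equals the constrained optimum, and since each summand is strictly convex in $X$ through the quadratic term, the overall minimizer is unique, completing the equivalence.
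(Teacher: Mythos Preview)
Your proposal is correct and follows essentially the same route as the paper: both arguments use Parseval's identity and Definition~\ref{def:tnn} to split the objective into $K$ independent frontal-slice problems in the Fourier domain, invoke the SVT lemma of \cite{cai2010SVT} on each slice, and reassemble via the inverse DFT and the tSVD algorithm. Your treatment is in fact slightly more careful than the paper's, which does not explicitly address the conjugate-symmetry constraint needed to ensure the slice-wise minimizer corresponds to a real tensor.
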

\begin{proof}
    A proof is given in \textbf{Appendix} \ref{app:pf3.1}
\end{proof}
Combined with Remark \ref{A1} deferred to \textbf{Appendix} \ref{app:pf3.1}, it indicates that our approach contributes to a smoother global semantic space by flexibly truncating the high-frequency components of the parameter tensor.

Specifically, our theorem elucidates that as the regularization coefficient \(\tau\) in Eq. (\ref{eq4}) decreases, corresponding to a larger truncation threshold $\frac{1}{2\tau}$ in T-tSVD, leading to a smoother global semantic space. In our objective function, we set the coefficient to \(\lambda / \vartheta^{\frac{t}{I}} (\vartheta>1)\), corresponding to an adaptive threshold of \(\frac{1}{2\lambda} \vartheta^{\frac{t}{I}}\). This implies that with the accumulation of noise (i.e., the increase in communication rounds $t$), the server coordinates the gradual enhancement of semantic smoothness (\emph{collaboration}) among the various clients, akin to a \emph{CEO}. Furthermore, we can choose appropriate initialization coefficients $\lambda$ for different privacy settings. Specifically, for stronger privacy guarantees (larger Gaussian noise), selecting a smaller $\lambda$ results in a smoother global semantic space. Furthermore, we derive the following corollary.
\begin{proposition}
\label{corollary:1}   
Given that $\boldsymbol{\mathcal{W}} \in \mathbb{R}^{d \times h \times K}$ and the regularization coefficient $\tau=\frac{1}{2\sigma_{m}}$ in Eq. (\ref{eq4}), we have $\operatorname{T-tSVD}(\boldsymbol{\mathcal{W}}, \sigma_m)$ with truncation threshold $\sigma_m$. If $\sigma_m$ \textbf{larger than} the highest singular value of $\overline{\boldsymbol{W}}^{(i)}$ for $i=2,3,\dots, K$, the updated parameters in our \textbf{FedCEO} will degenerate to the low-rank approximation of the global parameter in \textbf{FedAvg}.

Formally, for all $k=1,\dots,K$ we have
\begin{align*}
  [\operatorname{T-tSVD}(\boldsymbol{\mathcal{W}},\sigma_m)]^{(k)}&=\operatorname{TruncatedSVD}(\boldsymbol{W},\frac{\sigma_m}{K})  \\ &\approx \boldsymbol{W}  \left(\sigma_m=\max_{2 \leq i \leq n}\left\{\sigma_r\big(\overline{\boldsymbol{{W}}}^{(i)}\big)\right\}\right),
\end{align*}
where $\boldsymbol{\mathcal{W}}$ is the parameter tensor in FedCEO, $\boldsymbol{W}$ is the aggerated average parameter in FedAvg and $\operatorname{TruncatedSVD(\cdot)}$ is the truncated SVD operator for matrices (Defined in \textbf{Appendix} \ref{app:alTtsvd}).
\end{proposition}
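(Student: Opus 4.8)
The plan is to push everything through the Fourier domain, where Theorem~\ref{them:them3.1} tells us that $\operatorname{T-tSVD}(\boldsymbol{\mathcal{W}},\sigma_m)$ acts slice-by-slice as singular-value soft-thresholding with threshold $\sigma_m$ (taking $\tau=\tfrac{1}{2\sigma_m}$, so that $\tfrac{1}{2\tau}=\sigma_m$). Concretely, I would first form $\overline{\boldsymbol{\mathcal{W}}}=\operatorname{DFT}(\boldsymbol{\mathcal{W}},3)$, take the SVD of each frontal slice $\overline{\boldsymbol{W}}^{(i)}=\overline{\boldsymbol{U}}^{(i)}\,\overline{\boldsymbol{S}}^{(i)}\,(\overline{\boldsymbol{V}}^{(i)})^{H}$, and invoke Theorem~\ref{them:them3.1} to write the thresholded diagonal as $\overline{\boldsymbol{D}}^{(i)}(j,j)=\max\{\overline{\boldsymbol{S}}^{(i)}(j,j)-\sigma_m,0\}$. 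The reconstructed tensor is then obtained by an inverse DFT along the third dimension.

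The two structural facts I would establish next are the crux of the argument. First, because the DFT along the client dimension evaluates the zero-frequency component as a plain sum, the first frontal slice is $\overline{\boldsymbol{W}}^{(1)}=\sum_{k=1}^{K}\boldsymbol{W}_k=K\boldsymbol{W}$, i.e.\ exactly $K$ times the FedAvg aggregate $\boldsymbol{W}=\tfrac{1}{K}\sum_{k}\boldsymbol{W}_k$. Second, the hypothesis $\sigma_m\ge\max_{2\le i\le K}\sigma_r(\overline{\boldsymbol{W}}^{(i)})$ means every singular value of every slice $i=2,\dots,K$ lies at or below the threshold, so soft-thresholding annihilates them: $\overline{\boldsymbol{D}}^{(i)}=\boldsymbol{0}$ and hence $\overline{\boldsymbol{\hat{W}}}^{(i)}=\boldsymbol{0}$ for all $i\ge 2$. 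Thus only the DC slice survives the truncation.

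With only $\overline{\boldsymbol{\hat{W}}}^{(1)}$ nonzero, I would apply the inverse DFT: since the zero-frequency component spreads uniformly back across the third dimension, each reconstructed client slice equals $\tfrac{1}{K}\overline{\boldsymbol{\hat{W}}}^{(1)}$, so all $K$ outputs coincide, which already yields the degeneration to a single shared matrix. To identify that matrix, I use the SVD scaling property: writing $\boldsymbol{W}=\boldsymbol{U}\boldsymbol{\Sigma}\boldsymbol{V}^{H}$ gives $\overline{\boldsymbol{W}}^{(1)}=K\boldsymbol{W}=\boldsymbol{U}(K\boldsymbol{\Sigma})\boldsymbol{V}^{H}$, whose soft-thresholded form divided by $K$ is $\boldsymbol{U}\max\{\boldsymbol{\Sigma}-\tfrac{\sigma_m}{K},\boldsymbol{0}\}\boldsymbol{V}^{H}=\operatorname{TruncatedSVD}(\boldsymbol{W},\tfrac{\sigma_m}{K})$, establishing the claimed identity for every $k$. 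The approximation $\approx\boldsymbol{W}$ then follows by observing that, for the stated choice of $\sigma_m$, the effective matrix-domain threshold $\sigma_m/K$ is small relative to the dominant singular values of $\boldsymbol{W}$ carried by the DC slice, so the truncation perturbs $\boldsymbol{W}$ only negligibly.

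The main obstacle I anticipate is bookkeeping the normalization: one must track the factor $K$ that appears when the DC frontal slice is formed as a sum, cancel it correctly against the $1/K$ from the inverse DFT, and confirm that these combine to convert the ambient threshold $\sigma_m$ into the matrix-domain threshold $\sigma_m/K$. A secondary subtlety is verifying that the soft-thresholding rule of Theorem~\ref{them:them3.1} matches the definition of $\operatorname{TruncatedSVD}$ used in the statement, and ensuring that the singular-value hypothesis is invoked under the correct DFT convention, so that it is genuinely the non-DC (noise-dominated, high-frequency) slices whose spectra fall below $\sigma_m$ while the common averaged component in the DC slice is retained.
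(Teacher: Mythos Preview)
Your proposal is correct and follows essentially the same route as the paper: working in the Fourier domain via Theorem~\ref{them:them3.1}, using the hypothesis to zero out slices $i\ge 2$, identifying the DC slice as $K\boldsymbol{W}$, applying the inverse DFT to obtain $\tfrac{1}{K}\operatorname{TruncatedSVD}(K\boldsymbol{W},\sigma_m)=\operatorname{TruncatedSVD}(\boldsymbol{W},\sigma_m/K)$, and arguing the approximation from smallness of $\sigma_m/K$. Your bookkeeping of the $K$ factors and the SVD scaling identity matches the paper's Eq.~(\ref{eq15})--(\ref{eq18}) exactly.
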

\begin{proof}
    A proof is given in \textbf{Appendix} \ref{app:C1}.
\end{proof}
Hence, we can infer that when the truncation threshold is appropriately chosen (allowing the parameter tensor to retain only the first frequency component), our method can approximately degenerate into FedAvg. Based on the above analysis, we can visualize the low-rank proximal optimization process through dynamic singular value truncation in the spectral space, as illustrated in Figure \ref{g2}.

\textbf{FedAvg vs. FedCEO}: In contrast to our FedCEO, FedAvg implies retaining only the lowest-frequency components in the spectral space, representing coarse-grained \emph{mutual collaboration} that lacks the adaptability to different DP settings and continuous training processes. Our approach, characterized by flexible complementarity of semantic information due to adaptive truncation thresholds, demonstrates superior model utility in DPFL.

\subsection{On the Scalability}
The complexity of our approach is $O(\sum_{l=1}^LK\cdot\min(d_ld_{l-1}^2, d_{l}^2d_{l-1}))$ where $L$ is the number of layers, $K$ is the number of clients and $\min(d_ld_{l-1}^2, d_{l}^2d_{l-1})$ is the complexity of matrix-SVD for each layers. To scale to larger models, we can perform T-tSVD to the last few layers only since these layers will be narrower than other layers but contain more semantic information.

\begin{figure}[t]
\begin{center}
\centerline{\includegraphics[width=0.48\textwidth]{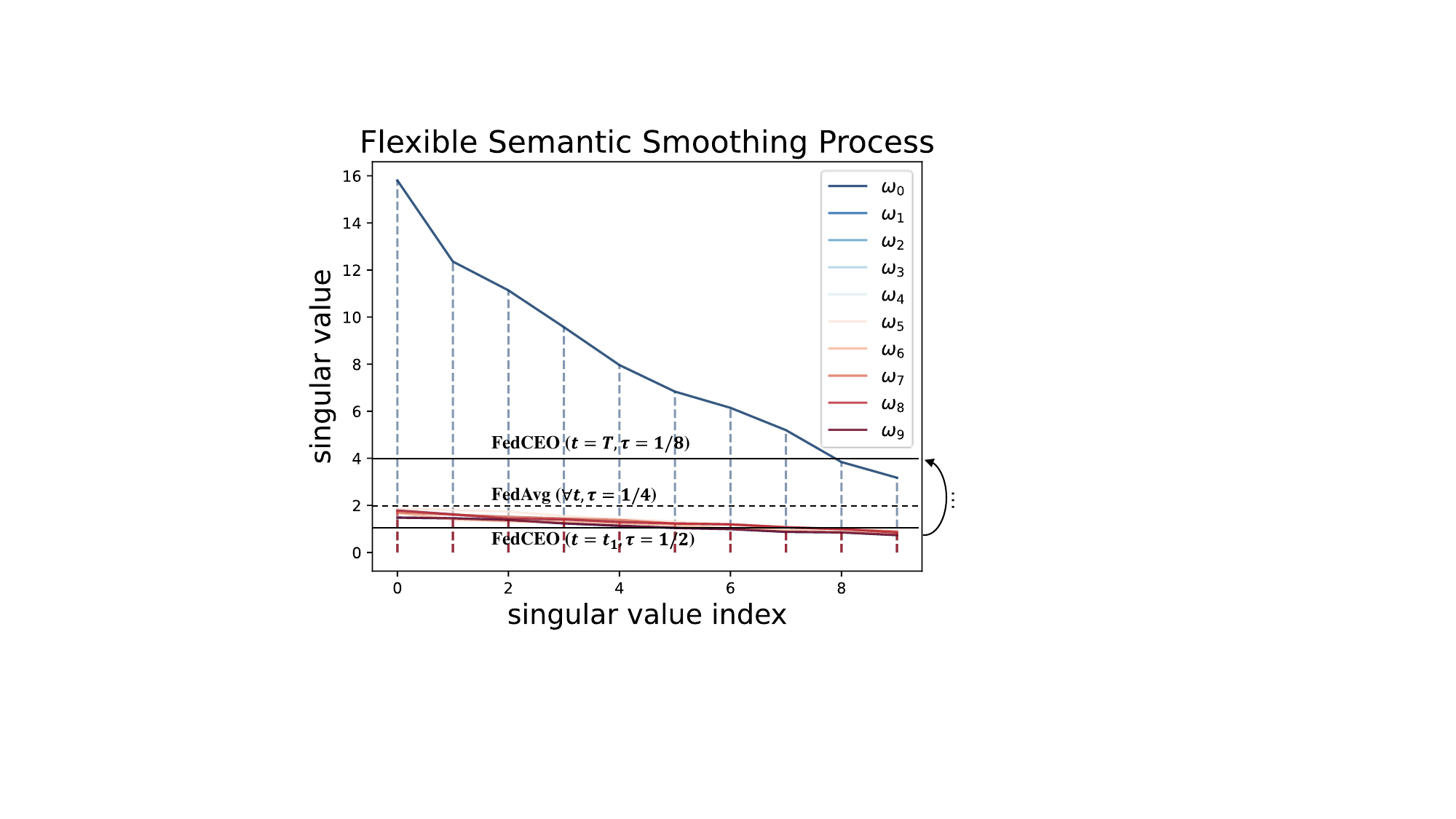}}
\caption{The visualization of the low-rank proximal optimization process, where $w_i$ is the $(i+1)$-th frequency component.}
\label{g2}
\end{center}
\end{figure}

\section{Utility-Privacy Trade-off Analysis}
In this section, we provide a theoretical analysis of the utility and privacy guarantees for our FedCEO. We denote \(\epsilon_u\) and \(\epsilon_p\) as the model utility and the privacy budget, respectively. Furthermore, we establish an improved utility-privacy trade-off bound \(\epsilon_u \cdot \epsilon_p \leq O(\sqrt{d}/K)\). In comparison to recent advanced work, our approach exhibits an improvement of \(O(\sqrt{d})\) over the current SOTA result (i.e. \(O(d/K)\) as reported in \cite{shen2023DPFRL}). 

\subsection{Utility Analysis}
\label{sec:4.1}
Firstly, we introduce a generalized definition of utility, which encompasses the definitions in the latest works.

\begin{definition}[Utility Loss \cite{zhang2022utilityDef}]
\label{defUtility}
    The model utility is characterized as the difference in performance when utilizing the protected model information sampled from the distribution \(P'_k\) compared to that drawn from the unprotected distribution \(P_k\),
\[
\epsilon := \frac{1}{K} \sum_{k=1}^{K} \epsilon_{k} = \frac{1}{K} \sum_{k=1}^{K} \left[ U_k(P^{\prime}_{k}) - U_k(P_k) \right],
\]
where $P^{\prime}_k$ and $P_k$ represent, respectively, the distributions of \textit{convergent models} with or without DP. $U_k(P) = \mathbb{E}_{D_k}\mathbb{E}_{W_k} \left[ \frac{1}{B} \sum_{b \in \mathcal{B}} U(W_k, b) \right]$ denotes the expected utility taken with respect to $W_k \sim P_{k}^{\prime}$ or $P_k$ and $U$ is any metric measuring the performance of the model.
\end{definition}

In this paper, we denote \(P_k\) as the local model distribution in FedAvg without DP, and \(P_k'\) as the local model distribution in our FedCEO with DP. Let \(U\) represent the empirical loss of the model. We define our utility as follows by setting \(P_k = w_k\), \(P_k' = w_k'\), and \(U = F\).
\begin{definition}[Model Utility] We define the utility of the model within the differential privacy federated learning framework as follows:
\begin{align*}
\epsilon_u := \frac{1}{K} \sum_{k=1}^{K} \epsilon_{u,k} 
           &= \frac{1}{K} \sum_{k=1}^{K} \left[ f_k(w^{\prime}_{k}) - f_k(w_k) \right] \\
           &= f(w^{\prime}) - f(w^{*}),
\end{align*}
where $w^{\prime}$ and $w^{*}$ represent, respectively, \textit{convergent global models} of FedCEO or FedAvg. $f_k$ denotes local empirical risk and $f$ denotes global empirical risk.
\end{definition}

Based on the above definitions, we present the utility guarantee for our FedCEO as follows.
\begin{theorem}[Utility Analysis of FedCEO]
\label{thm:utility_CEO}
 Suppose that Assumptions \ref{A5}, \ref{A6} and \ref{A7} hold and let local empirical risk $f_k$ satisfies Definition \ref{Bdis}. Set $\tau=\frac{1}{2(\tau_0 / K)}$ in Eq. (\ref{eq4}) and $0<m<1/\mu$, where $\small{m = \eta-\frac{L_2B^2 (1+\gamma)^2 }{2 {\mu}^2}}$ $\small{-\left[\frac{\sqrt{2}\left(\mu B(1+\gamma) + 2L_2(1+\gamma)^{2}B^{2}\right)}{\sqrt{K}{\mu}^{2}} + \frac{2L_2 (1+\gamma)^2 B^2}{K{\mu}^2 }\right]}$.
 
 Algorithm \ref{alg:alg2} satisfies
$$
\label{eqthem4.3}
    \epsilon_u = f\left(w^{\prime}\right) - f\left(w^{*}\right) \leq \frac{\sqrt{2} L_1}{K} \left(\sqrt{r}\tau_0 + \sqrt{d}\sqrt{h}C\sigma \right) \frac{1}{2\mu m},
$$
where $r$ is the rank of the parameter tensor after our processing and $d$ is the dimension of input data.
\end{theorem}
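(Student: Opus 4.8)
The plan is to bound the utility gap $\epsilon_u = f(w') - f(w^*)$ by decomposing the error between FedCEO's convergent model $w'$ and FedAvg's convergent model $w^*$ into two pieces: the error introduced by the low-rank truncation (the T-tSVD processing) and the error introduced by the DP Gaussian noise. First I would invoke the smoothness of $f$ guaranteed by the Assumptions (presumably $f_k$ is $L_1$-Lipschitz and $L_2$-smooth with $\mu$-strong-convexity-type conditions, judging from the constants appearing in $m$), so that $f(w') - f(w^*) \le L_1 \|w' - w^*\|$ up to the appropriate factor. This reduces the task to controlling the parameter-space distance $\|w' - w^*\|$, and the $\frac{1}{2\mu m}$ factor strongly suggests that a strong-convexity / contraction argument is used to translate a per-round perturbation into a steady-state bound on the convergent iterates, with $m$ playing the role of an effective step/contraction parameter and the condition $0 < m < 1/\mu$ guaranteeing the geometric sum converges.

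Next I would quantify the two perturbation sources separately. For the DP noise, each client injects $\mathcal{N}(0, I\sigma^2 C^2/K)$ into a $d \times h$ parameter block, so the expected Frobenius norm of the aggregated noise scales like $\sqrt{d}\sqrt{h}\,C\sigma / \sqrt{K}$ (or $/K$ after averaging over $K$ clients); this is exactly the $\sqrt{d}\sqrt{h}C\sigma$ term in the bound, divided by $K$. For the truncation error, I would appeal directly to Theorem~\ref{them:them3.1}: the T-tSVD with threshold $\frac{1}{2\tau} = \tau_0/K$ soft-thresholds each singular value in the Fourier domain, so the energy removed is at most $\frac{1}{2\tau}$ per retained singular value, and summing over the $r$ surviving frequency/singular components (where $r$ is the post-processing rank) gives a deviation of order $\sqrt{r}\cdot\frac{1}{2\tau} = \sqrt{r}\,\tau_0/K$. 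Combining these two contributions additively yields the $\frac{1}{K}(\sqrt{r}\tau_0 + \sqrt{d}\sqrt{h}C\sigma)$ factor, and multiplying by the Lipschitz constant $L_1$ and the convergence factor $\frac{1}{2\mu m}$ recovers the stated bound, with the $\sqrt{2}$ emerging from a Cauchy--Schwarz or Jensen step when passing between the tensor Frobenius norm and the averaged per-client distances.

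The main obstacle I anticipate is rigorously establishing the $\frac{1}{2\mu m}$ contraction factor, i.e., showing that the single-round perturbation propagates to a bounded steady-state gap rather than accumulating without control. This requires carefully analyzing the coupled recursion in which the local SGD updates (contributing the terms with $\eta$, $L_2 B^2(1+\gamma)^2/\mu^2$, etc. inside $m$) interact with both the noise injection and the periodic low-rank projection every $I$ rounds. In particular, one must verify that the low-rank proximal step is non-expansive (which follows from it being a proximal operator of a convex norm, as noted via \cite{cai2010SVT}), so that it does not amplify the distance to $w^*$, and then show the remaining dynamics form a contraction with modulus controlled by $m$. Establishing the precise form of $m$ and the requirement $0 < m < 1/\mu$ — tracking how data heterogeneity ($\gamma$), batch size ($B$), and client count ($K$) enter — is the delicate bookkeeping at the heart of the proof; the rest is essentially assembling triangle inequalities and the two norm bounds above.
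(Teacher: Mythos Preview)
Your decomposition into truncation error ($\sqrt{r}\,\tau_0/K$) and DP-noise error ($\sqrt{d}\sqrt{h}\,C\sigma/K$), and your identification of the geometric-sum mechanism behind the $\frac{1}{2\mu m}$ factor, are both correct and match the paper. However, the technical route differs in two respects worth noting.

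First, the paper does \emph{not} bound $\|w'-w^*\|$ and then multiply by $L_1$ at the end. Instead it runs a \emph{function-value} recursion: a descent lemma (Taylor expansion plus $L_2$-smoothness) gives $f(\bar w(t{+}1))\le f(w'(t))-(\text{something})\|\nabla f(w'(t))\|^2$; the $L_1$-Lipschitz bound is applied \emph{per round} to control $f(w'(t{+}1))-f(w(t{+}1))$ by the per-round tensor perturbation; and the PL inequality from $\mu$-strong convexity converts $\|\nabla f(w'(t))\|^2$ into $2\mu\,[f(w'(t))-f(w^*)]$. This yields $f(w'(t{+}1))-f(w^*)\le(1-2\mu m)[f(w'(t))-f(w^*)]+\text{perturbation}$, and summing the geometric series gives the factor $\frac{1}{2\mu m}$. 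Your proposed iterate-distance contraction would require a different (and typically harder) one-step inequality.

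Second, the precise form of $m$ comes from a FedProx-style analysis: the $\gamma$ is the inexactness parameter of the local solver (Definition of $\gamma$-inexact solution), $B$ is the local-dissimilarity constant, and the $1/\sqrt{K}$ and $1/K$ terms arise from bounding the variance introduced by sampling $K$ clients (the gap between $w(t{+}1)$ and $\bar w(t{+}1)$). This is the ``delicate bookkeeping'' you anticipated, but it is not about showing the proximal step is non-expansive --- the paper never uses non-expansiveness; it simply bounds $\|\hat{\mathcal W}-\mathcal W_{\mathcal N}\|_F\le\sqrt{r}\,\tau_0$ directly via Parseval and the soft-threshold formula, then folds that into the additive per-round perturbation.
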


\begin{proof}
A proof is given in \textbf{Appendix} \ref{app:pf4.2.1}
\end{proof}
In summary, we demonstrate that our FedCEO satisfies an outstanding utility bound with $\epsilon_u \leq O(\frac{\sqrt{r} + \sqrt{d}}{K})$, providing theoretical utility guarantees. Moreover, achieving low-rank global semantic space of the whole parameters (i.e. $r \ll d$), we have $\epsilon_u \leq O(\frac{\sqrt{d}}{K})$.

\subsection{Privacy Analysis}
\label{sec: 4.2}
We first establish a general privacy guarantee for Algorithm \ref{alg:alg1}, extending the theoretical results of data-level DP from \cite{DPDL} to user-level DP.
\begin{lemma}[Privacy Analysis of UDP-FedAvg \cite{cheng2022regularizationDP}]
\label{lemma:privacy_avg}
There exist constants \( c_1 \) and \( c_2 \) so that given the clients sampling probability \( p = \frac{K}{N} \) and the number of communication rounds \( T \), for any \( \epsilon < c_1 p^2 T \), Algorithm \ref{alg:alg1} satisfies user-level \( (\epsilon, \delta) \)-DP for any \( \delta > 0 \) if we choose
\(\sigma \geq \frac{c_2  p\sqrt{T \log(1/\delta)}}{\epsilon}.\)
\end{lemma}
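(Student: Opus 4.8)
The plan is to reduce the user-level privacy guarantee for Algorithm \ref{alg:alg1} to the well-established data-level privacy accounting for the Gaussian mechanism composed with subsampling, as developed in \cite{DPDL}. The key observation is that, under user-level DP, one \emph{record} corresponds to all data held by a single client; hence two adjacent datasets differ in the participation of exactly one client. At each round, the server subsamples a set $S_t$ of $K$ out of $N$ clients, so each client participates with probability $p = K/N$. The per-client contribution is the clipped update $\tilde{\Delta}$, whose $\ell_2$-sensitivity is bounded by the clipping threshold $C$ by construction of \textbf{GradientClip}. The noise added in the highlighted line of Algorithm \ref{alg:alg1} is $\mathcal{N}(0, \boldsymbol{I}\sigma^2 C^2 / K)$, which (after accounting for the $1/K$ averaging) realizes a Gaussian mechanism with noise multiplier proportional to $\sigma$ relative to the sensitivity $C$.

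The main steps I would carry out are as follows. First, I would frame a single round as a subsampled Gaussian mechanism acting on the user-level adjacency relation: with probability $p$ the differing client is selected, and conditioned on selection its clipped update perturbs the aggregate by at most $C$ in $\ell_2$ norm, masked by Gaussian noise calibrated to $\sigma C$. Second, I would invoke the moments accountant (equivalently, a Rényi DP / subsampled-Gaussian composition argument) from \cite{DPDL}: the log moment of the subsampled Gaussian mechanism at each round is bounded, and over $T$ rounds these moments add up linearly. This yields a bound on the $\lambda$-th moment of the privacy loss random variable of the composed $T$-round mechanism of the form $T \cdot \alpha(\lambda)$ with $\alpha(\lambda) = O(p^2 \lambda^2 / \sigma^2)$ in the regime where $p$ is small and $\sigma$ is not too small. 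Third, I would apply the tail bound that converts a moment bound into an $(\epsilon,\delta)$-guarantee: choosing $\lambda$ optimally and demanding that the resulting $\delta = \exp(T\alpha(\lambda) - \lambda\epsilon)$ be achievable, one recovers the stated condition. Concretely, optimizing over $\lambda$ gives $\epsilon = \Theta(p^2 T / \sigma^2) \cdot (\text{const}) + \sqrt{T\log(1/\delta)}\,p/\sigma$-type terms, and one then solves for the smallest $\sigma$ that certifies a target $\epsilon$, yielding $\sigma \geq c_2\, p\sqrt{T\log(1/\delta)}/\epsilon$ under the constraint $\epsilon < c_1 p^2 T$ for the absolute constants $c_1, c_2$ guaranteed by the accountant.

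I expect the main obstacle to be the transfer of the accounting from data-level to user-level adjacency in a way that is faithful to the algorithm's actual noise calibration. The standard statement of \cite{DPDL} is for data-level DP where the sensitivity comes from one training example; here I must verify that clipping \emph{the entire per-client update} $\Delta$ to norm $C$ before aggregation makes the per-client $\ell_2$-sensitivity exactly $C$, and that the noise scale $\sigma^2 C^2/K$ together with the $\tfrac{1}{K}$ averaging in the aggregation step produces the correct effective noise multiplier $\sigma$ against that sensitivity. A second subtlety is ensuring that local differential privacy, applied independently at each client as in the highlighted step of Algorithm \ref{alg:alg1}, composes correctly with the server-side subsampling so that the subsampled-Gaussian analysis still applies; I would argue that because the differing client's noisy update is the only term affected under the user-level adjacency, the single-round mechanism is precisely a subsampled Gaussian, and the remaining composition over $T$ rounds is then handled verbatim by the moments accountant. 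Once these two calibration checks are in place, the constants $c_1$ and $c_2$ and the final inequality follow directly from the cited tail bound.
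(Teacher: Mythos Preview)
Your proposal is correct and follows essentially the same route as the paper: the paper does not give a standalone proof of this lemma but simply states it as an extension of the data-level moments-accountant result from \cite{DPDL} to user-level adjacency, and your plan spells out precisely that reduction (per-client clipped update gives sensitivity $C$, subsampling with probability $p=K/N$, Gaussian noise with multiplier $\sigma$, then compose over $T$ rounds via the accountant). The calibration subtleties you flag about the $\sigma^2 C^2/K$ noise and the $1/K$ averaging are the only places requiring care, and once resolved the statement is exactly Theorem~1 of \cite{DPDL} transported to the user-level setting.
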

The above lemma proves that the model parameters uploaded by each client have strict privacy guarantees. Furthermore, taking into account the tensor low-rank proximal optimization at the server, we present the privacy guarantee for our FedCEO as follows.

\begin{theorem}[Privacy Analysis of FedCEO]
\label{thm:privacy_CEO}
 Suppose that the privacy budget \( \epsilon_p < c_1 q^2 T \), let $\sigma$ be the noise multiple (a tunable parameter that controls the privacy-utility
trade-off), Algorithm \ref{alg:alg2} satisfies user-level \( (\epsilon_p, \delta) \)-DP with 
$$\epsilon_p=\frac{c_2 K \sqrt{T \log(1/\delta)}}{N\sigma}.$$
\end{theorem}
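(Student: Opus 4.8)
The plan is to reduce the privacy analysis of FedCEO directly to that of LDP-FedAvg (Lemma \ref{lemma:privacy_avg}) by invoking the post-processing immunity of differential privacy. First I would observe that the only step in Algorithm \ref{alg:alg2} that touches raw local data is the call to \textbf{ClientDPUpdate}, which is identical to the corresponding subroutine in Algorithm \ref{alg:alg1}: each participating client clips its update to norm $C$ and adds isotropic Gaussian noise $\mathcal{N}(0, \boldsymbol{I}\sigma^2 C^2/K)$. Consequently, at every communication round the set of uploaded parameters $\{w_k^{\prime}(t)\}_{k=1}^{K}$ carries exactly the same per-round privacy cost as in LDP-FedAvg, and the same moments-accountant composition of \cite{DPDL} that underlies Lemma \ref{lemma:privacy_avg} applies over the $T$ rounds.

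The crux is to argue that the extra server-side machinery of FedCEO --- folding the noisy uploads into $\boldsymbol{\mathcal{W}_{\mathcal{N}}}$ and solving the tensor low-rank proximal objective in Eq. (\ref{eq3.5}) every $I$ rounds --- incurs no additional privacy loss. This follows because the optimization in Eq. (\ref{eq3.5}) is a deterministic map whose only input is the already-privatized tensor $\boldsymbol{\mathcal{W}_{\mathcal{N}}}$; it accesses no raw data. Under user-level DP, replacing one client's entire dataset perturbs only that client's upload $w_k^{\prime}(t)$, so the T-tSVD map (exactly like the plain averaging step in Algorithm \ref{alg:alg1}) is a post-processing of a differentially private release and hence preserves the guarantee. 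I would also note that feeding the smoothed parameters $\hat{w}_k(t-1)$ back as the initialization for the next block of local updates does not break the accounting: this is simply adaptive composition, in which each noise-injection step is conditioned on previously released (and already privatized) outputs, which is precisely the setting the moments accountant is designed to handle.

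Having established that FedCEO and LDP-FedAvg share an identical privacy ledger, the bound follows by specializing Lemma \ref{lemma:privacy_avg}. Setting the sampling probability $q = p = \frac{K}{N}$ and solving the relation $\sigma \geq c_2 p \sqrt{T\log(1/\delta)}/\epsilon$ for the budget yields $\epsilon_p = c_2 p \sqrt{T\log(1/\delta)}/\sigma = c_2 K \sqrt{T\log(1/\delta)}/(N\sigma)$, valid whenever $\epsilon_p < c_1 q^2 T$, which is precisely the claimed statement. The main obstacle I anticipate is the careful verification of the post-processing claim for the low-rank step: because the tensor objective jointly couples all $K$ clients' noisy parameters, one must be explicit that this coupling operates entirely on privatized quantities and is independent of the underlying data, so that no cross-client information beyond the DP outputs is ever exposed.
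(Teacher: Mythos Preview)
Your proposal is correct and follows essentially the same route as the paper: the paper's proof also invokes Lemma \ref{lemma:privacy_avg} for LDP-FedAvg and then argues that the server-side low-rank proximal optimization is a deterministic map (equivalent to T-tSVD by Theorem \ref{them:them3.1}) acting only on already-privatized outputs, hence post-processing. If anything, your treatment is more careful than the paper's, which does not explicitly address the adaptive feedback of $\hat{w}_k(t-1)$ into subsequent rounds.
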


\begin{proof}
A proof is given in \textbf{Appendix} \ref{app:pf4.2}
\end{proof}
In summary, we demonstrate that our FedCEO satisfies \( (\epsilon_p, \delta) \)-differential privacy with privacy budget $\epsilon_p=\frac{c_2 K \sqrt{T \log(1/\delta)}}{N\sigma}$, providing theoretical privacy guarantees.

\subsection{Guaranteed Improvement of Utility-Privacy Trade-off}
\label{sec: 4.3}
On the one hand, Theorem \ref{thm:utility_CEO} indicates that achieving high utility (i.e., a small $\epsilon_u$) necessitates selecting a \emph{small} noise multiplier $\sigma$. On the other hand, Theorem \ref{thm:privacy_CEO} asserts that achieving high privacy (i.e., a small $\epsilon_p$) necessitates selecting a \emph{large} noise multiplier $\sigma$. Next, we unify the utility and privacy analyses of our FedCEO in the DPFL setting to establish the overarching utility-privacy trade-off.
\begin{corollary}
    Let $\epsilon_u$ and $\epsilon_p$ denote the model utility and the privacy budget, respectively. Under the Assumptions \ref{A5} to \ref{A7} and the conditions outlined in Theorems \ref{thm:utility_CEO} and \ref{thm:privacy_CEO}, Algorithm \ref{alg:alg2} satisfies
    $$
    \epsilon_u \cdot \epsilon_p \leq \frac{\hat{c}\sqrt{d}}{N},
    $$
    where $d$ is the input dimension and $N$ is the total number of clients. $\hat{c}$ hides the constants and $\log$ terms.
\end{corollary}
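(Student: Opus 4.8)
The plan is to combine the two main results, Theorem \ref{thm:utility_CEO} and Theorem \ref{thm:privacy_CEO}, by multiplying the utility bound and the privacy budget so that the tunable noise multiplier $\sigma$ cancels. First I would recall the utility bound
\begin{equation*}
\epsilon_u \leq \frac{\sqrt{2} L_1}{K} \left(\sqrt{r}\tau_0 + \sqrt{d}\sqrt{h}C\sigma \right) \frac{1}{2\mu m},
\end{equation*}
and the privacy budget $\epsilon_p = \frac{c_2 K \sqrt{T \log(1/\delta)}}{N\sigma}$. The key structural observation is that $\epsilon_u$ is affine increasing in $\sigma$ while $\epsilon_p$ is decreasing like $1/\sigma$, so their product is where the $\sigma$-dependence partially cancels. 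Multiplying the two expressions directly yields a term proportional to $\frac{\sqrt{d}\sqrt{h}C\sigma}{K}\cdot\frac{K}{N\sigma}=\frac{\sqrt{d}\sqrt{h}C}{N}$, in which $\sigma$ has been eliminated, plus a cross term proportional to $\frac{\sqrt{r}\tau_0}{K}\cdot\frac{K}{N\sigma}=\frac{\sqrt{r}\tau_0}{N\sigma}$ that still carries an inverse $\sigma$.

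Next I would dispense with the residual $\sigma$-dependent cross term. The honest way to absorb it is to invoke the low-rank regime emphasized right after Theorem \ref{thm:utility_CEO}: when the tensor low-rank proximal optimization drives the parameter tensor to a sufficiently low rank, the $\sqrt{r}\tau_0$ contribution is dominated by the $\sqrt{d}\sqrt{h}C\sigma$ contribution, so the utility bound simplifies to $\epsilon_u \leq O(\sqrt{d}/K)$ with the $\sigma$ factor retained only in the dominant term. Under this simplification the product collapses cleanly to $\epsilon_u\cdot\epsilon_p \leq O(\sqrt{d}/N)$, where the $\sqrt{h}$, $C$, $L_1$, $\mu$, $m$, and $\sqrt{T\log(1/\delta)}$ factors are all swept into the constant $\hat c$ together with the stated $\log$ terms.

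The main obstacle is controlling the cross term $\frac{\sqrt{r}\tau_0}{N\sigma}$ rigorously rather than heuristically, since it is not bounded uniformly in $\sigma$ and could blow up as $\sigma \to 0$. I expect the cleanest argument is to note that the regime of interest is the privacy-preserving one where $\sigma$ is bounded below (a nontrivial privacy budget forces $\sigma \geq \sigma_{\min} > 0$ through the constraint $\epsilon_p < c_1 q^2 T$ and the formula for $\epsilon_p$), so $1/\sigma$ is bounded and the cross term is itself $O(1/N)$, hence subsumed by the $O(\sqrt{d}/N)$ dominant term. The remaining step is purely bookkeeping: verify that every coefficient appearing in front of $\sqrt{d}/N$ is a constant or logarithmic factor under Assumptions \ref{A5} through \ref{A7} and the hypotheses of the two theorems, so that defining $\hat c$ to hide these quantities makes the stated inequality $\epsilon_u \cdot \epsilon_p \leq \frac{\hat{c}\sqrt{d}}{N}$ hold.
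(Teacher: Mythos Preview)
Your proposal is correct and follows the same route as the paper: multiply the utility bound of Theorem \ref{thm:utility_CEO} by the privacy budget of Theorem \ref{thm:privacy_CEO}, observe that the $K$ and $\sigma$ factors cancel in the dominant $\sqrt{d}\sqrt{h}C\sigma$ term, and absorb the remaining coefficients into $\hat c$. The paper itself does not spell out a separate proof of the corollary; it simply invokes the low-rank regime stated immediately after Theorem \ref{thm:utility_CEO} (namely $\epsilon_u \leq O(\sqrt{d}/K)$) and then records the product bound, so your treatment of the residual $\sqrt{r}\tau_0/(N\sigma)$ cross term is actually more explicit than what appears in the text.
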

Overall, our FedCEO achieves a utility-privacy trade-off bound $\epsilon_u \cdot \epsilon_p \leq O\left(\frac{\sqrt{d}}{N}\right).$

\textbf{Summary}: Under the unified settings of utility (Definition \ref{defUtility}) and privacy (Definition \ref{DefUDP}), compared to existing SOTA results, we improve the utility-privacy trade-off bound from $O(d^{1.5}/N)$ (Theorem 5.2 in \cite{jain2021DPpersonal}) and $O(d/N)$ (Corollary 5.1 in CENTAUR \cite{shen2023DPFRL}) to $O(\sqrt{d}/N)$. In contrast, \textbf{our approach leverages higher-order tensor algorithms to integrate the processes of parameter partition and semantic fusion, resulting in a notable improvement with $O(\sqrt{d})$ and providing a better guarantee for the trade-off between utility and privacy in DPFL.}

\begin{table*}
\renewcommand{\arraystretch}{1.0}
\caption{Testing accuracy (\%) on EMNIST and CIFAR-10 under $\delta=10^{-5}$ and various privacy settings with three common $\sigma_g$. A larger $\sigma_g$ indicates a smaller $\epsilon_p$, i.e. the stronger privacy guarantee. $\vartheta$ stands for the scaling factor of the parameter $\lambda$.} 
\label{utilityExps}
\centering
\scalebox{1.0}{
\begin{tabular}{cccccccc} 
\hline
Dataset                  & Model                         & Setting                          & UDP-FedAvg & PPSGD & CENTAUR & FedCEO ($\vartheta$=1) & FedCEO ($\vartheta>$1)  \\ 
\hline
\multirow{3}{*}{EMNIST}  & \multirow{3}{*}{MLP-2-Layers} & $\sigma_g=1.0$ & 76.59\%    & 77.01\%      &   77.26\%      &    77.14\%                                     &   \textbf{78.05\%}       \\
                         &                               & $\sigma_g=1.5$ & 69.91\%    & 70.78\%      &  71.86\%       &     71.56\%                    & \textbf{72.44\%}      \\
                         &                               & $\sigma_g=2.0$ & 60.32\%    &    61.51\%   &  62.12\%       &                             63.38\% & \textbf{64.20\%}          \\ 
\hline
\multirow{3}{*}{CIFAR-10} & \multirow{3}{*}{LeNet-5}      & $\sigma_g=1.0$ & 43.87\%    &  49.24\%     &   50.14\%      & 50.09\%                                 & \textbf{54.16\%}  \\
                         &                               & $\sigma_g=1.5$ & 34.34\%    &    47.56\%   &   46.90\%      & 48.89\%                                 & \textbf{50.00\%                        }  \\
                         &                               & $\sigma_g=2.0$ & 26.88\%    &    34.61\%   &   36.70\%      & 37.39\%                                 & \textbf{45.35\% }  \\
\hline
\end{tabular}
}
\end{table*}

\section{Experiments}
In this section, we present empirical results on real-world federated learning datasets, validating the utility guarantees, privacy guarantees, and the advanced trade-off between the two provided by our FedCEO algorithm.

\subsection{Experiment Setup}
We set the clipping threshold \(C\) uniformly to 1.0 and use three different privacy settings corresponding to \(\sigma_g=1.0, 1.5, \text{and } 2.0\). \(\sigma_g\) is proportional to the noise multiplier $\sigma$, defined in the privacy engine from the Opacus package \cite{yousefpour2021opacus}. We conduct experimental evaluations on both MLP and LeNet model architectures, reporting the testing accuracy of the global model (consistent with representative papers \cite{bietti2022PPSGD, shen2023DPFRL} from recent years). Specific model structures and hyperparameters are detailed in \textbf{Appendix} \ref{C.1}. 


\subsection{Utility Experiments}
In Table \ref{utilityExps}, our FedCEO demonstrates state-of-the-art performance under different privacy settings, aligning with the utility analysis in Section \ref{sec:4.1}. Additionally, we can adapt to various privacy settings by flexibly adjusting the initial coefficient \(\lambda\). Specifically, larger \(\sigma_g\) corresponds to smaller \(\lambda\) (lower rank, i.e. smaller $r$). Moreover, our adaptive mechanism increases the truncation threshold (i.e. $\frac{1}{2\lambda}\vartheta^{\frac{t}{I}}$) as the Gaussian noise accumulates during the FL training process, due to the geometric series with a common ratio \(\vartheta>1\). Refer to \textbf{Appendix} \ref{app: runtime}, \ref{app: noniid} and \ref{app: moreExps} for experiments about model training efficiency, heterogeneous FL settings and other local model as well text dataset. 

\begin{figure}[t]
\begin{center}
\centerline{\includegraphics[width=0.48\textwidth]{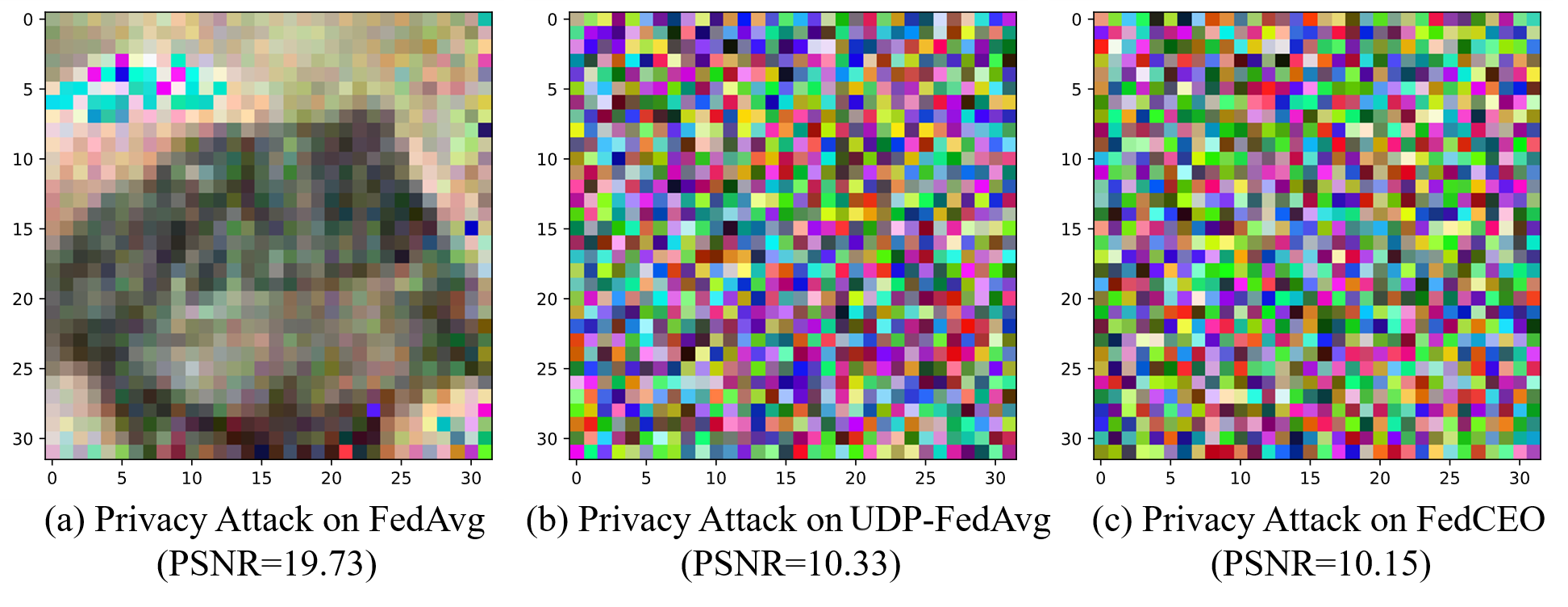}}
\caption{Privacy protection performance of three federated learning frameworks on CIFAR-10. Both FedCEO and UDP-FedAvg demonstrate robust defense against privacy attacks with smaller \textit{Peak Signal-to-Noise Ratio} (PSNR), while DLG successfully infers sensitive images from clients in FedAvg.} 
\label{Privacyattack}
\end{center}
\end{figure}

\subsection{Privacy Experiments}
In Figure \ref{Privacyattack}, our FedCEO and UDP-FedAvg exhibit similar privacy protection performance for user data, validating the privacy analysis in Section \ref{sec: 4.2}. Specifically, we input the locally uploaded LeNet model (gradients) from three different FL frameworks into the \emph{Deep Leakage from Gradients} (DLG) \cite{zhu2019dlg} algorithm and set attack iterations to 300 with a learning rate of 0.01. For our FedCEO, the DLG attack is conducted after the tensor low-rank proximal optimization to verify that our operation does not compromise the privacy protection of DP. Then we report the inference images and their PSNR (dB, $\downarrow$) values in Figure \ref{Privacyattack}. Refer to \textbf{Appendix} \ref{app:privacyexps} for more details.

\subsection{Improved Utility-Privacy Trade-off}
In Figure \ref{tradeoff}, we observe that our FedCEO exhibits the best utility performance across various privacy settings. Moreover, compared to other privacy-preserving methods, our model shows the smallest decrease in testing accuracy as privacy is enhanced (i.e. $\sigma_g$ increases and $\epsilon_p$ decreases). This indicates that our FedCEO achieves a best utility-privacy trade-off, consistent with the analysis in Section \ref{sec: 4.3}.

\begin{figure}[t]
\begin{center}
\centerline{\includegraphics[width=0.45\textwidth]{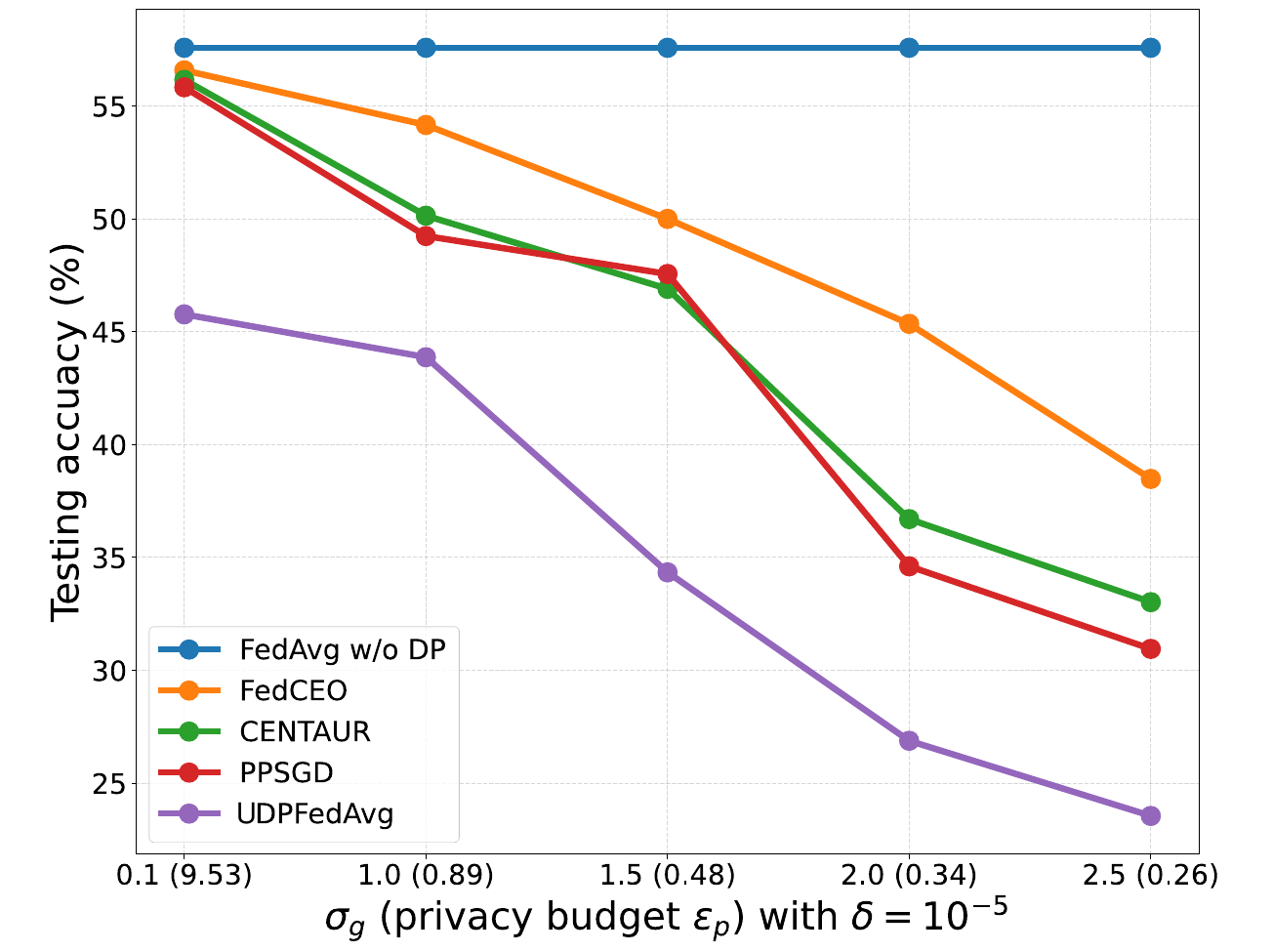}}
\caption{Utility-Privacy Trade-off for our FedCEO and other methods on CIFAR-10.}
\label{tradeoff}
\end{center}
\end{figure}

\section{Conclusion}
In this paper, we explore the concept of \emph{flexible semantic smoothness (collaboration)} among clients, providing theoretical guarantees for both utility and privacy. Our approach achieves an improved utility-privacy trade-off bound of $O(\sqrt{d})$ compared to the current SOTA results. Furthermore, we conduct comprehensive experiments to validate the utility and privacy across different datasets and privacy settings, demonstrating the advanced performance consistent with our theoretical analysis. In the future, we aim to extend tensor low-rank techniques to heterogeneous federated learning, addressing challenges of gradient conflicts.

\section{Impact Statements}
This paper presents work whose goal is to advance the field of Machine Learning. There are many potential societal consequences of our work, none of which we feel must be specifically highlighted here.

\bibliography{paper}
\bibliographystyle{icml2024}

\newpage
\appendix
\onecolumn
\section{More on Preliminaries}
\subsection{Discrete Fourier Transform}
\label{app:DFT}
For each vector $\boldsymbol{v} \in \mathbb{R}^n$, it defines the discrete Fourier transform, denoted as $\operatorname{DFT(\cdot)}$, as follows:
\begin{equation}
\overline{\boldsymbol{v}}=\operatorname{DFT}(\boldsymbol{v}):=\boldsymbol{F}_n \boldsymbol{v} \in \mathbb{C}^n, (\boldsymbol{F}_n)_{jk} = \left[\omega^{(j-1)(k-1)}\right]
\end{equation}

Here, $\boldsymbol{F}_n \in \mathbb{C}^{n \times n}$ is the DFT matrix and $\omega=\mathrm{e}^{-i\frac{2 \pi}{n}}$ where $e$ represents the base of the natural logarithm, and $i$ denotes the imaginary unit. Furthermore, we denote the inverse discrete Fourier transform as $\operatornamewithlimits{IDFT(\cdot)}$.

\subsection{T-product}
\label{app:DefTproduct}
For each third-order tensor $\boldsymbol{\mathcal{U}} \in \mathbb{R}^{n_1 \times n_2 \times n_3}$ and $\boldsymbol{\mathcal{V}} \in \mathbb{R}^{n_2 \times n_4 \times n_3}$, it defines the t-product, denoted as $\boldsymbol{\mathcal{U}} * \boldsymbol{\mathcal{V}}$, as follows:
\begin{equation}
    \boldsymbol{\mathcal{U}} * \boldsymbol{\mathcal{V}} := \operatorname{fold}(\operatorname{bcirc}(\boldsymbol{\mathcal{U}}) \cdot \operatorname{unfold}(\boldsymbol{\mathcal{V}})) \in \mathbb{R}^{n_1 \times n_4 \times n_3},
\end{equation}
where the $\operatorname{bcirc}(\cdot)$ is a tensor matricization operator known as block circulant matricization, denoted by
$$
\small \operatorname{bcirc}(\boldsymbol{\mathcal{U}}):=\left[\begin{array}{cccc}
\boldsymbol{U}^{(1)} & \boldsymbol{U}^{\left(n_3\right)} & \cdots & \boldsymbol{U}^{(2)} \\
\boldsymbol{U}^{(2)} & \boldsymbol{U}^{(1)} & \cdots & \boldsymbol{U}^{(3)} \\
\vdots & \vdots & \ddots & \vdots \\
\boldsymbol{U}^{\left(n_3\right)} & \boldsymbol{U}^{\left(n_3-1\right)} & \cdots & \boldsymbol{U}^{(1)}
\end{array}\right] \in \mathbb{R}^{n_1n_3 \times n_2n_3} .
$$

Furthermore, we define the tensor unfolding operator as follows:
$$
\operatorname{unfold}(\boldsymbol{\mathcal{V}}):=\left[\begin{array}{c}
\boldsymbol{V}^{(1)} \\
\boldsymbol{V}^{(2)} \\
\vdots \\
\boldsymbol{V}^{\left(n_3\right)}
\end{array}\right] \in \mathbb{R}^{n_2n_3 \times n_4} , \quad \operatorname{ fold }(\operatorname{unfold}(\boldsymbol{\mathcal{V}}))=\boldsymbol{\mathcal{V}} \in \mathbb{R}^{n_2 \times n_4 \times n_3},
$$
where $\boldsymbol{V}^{(i)}$ represents the $i$-th frontal slice of tensor $\boldsymbol{\mathcal{V}}$, i.e., $\boldsymbol{\mathcal{V}}(:,:,i)$.

\subsection{Algorithm of tSVD}
\label{app:altsvd}
For each matrix $Y \in \mathbb{R}^{n_1 \times n_2}$ of rank $r$, it defines the singular value decomposition operator, denoted as $\operatorname{SVD}(\cdot)$, as follows: 
\begin{equation}
\operatorname{SVD}(\boldsymbol{Y}):=\boldsymbol{U} \boldsymbol{\Sigma} \boldsymbol{V}^T, \quad \boldsymbol{\Sigma}=\operatorname{diag}\left(\left\{\sigma_i\right\}_{i=1}^{r}\right),
\end{equation}
where $\boldsymbol{U} \in \mathbb{R}^{n_1 \times n_1}, \boldsymbol{V} \in \mathbb{R}^{n_2 \times n_2}$ are orthogonal and $\boldsymbol{\Sigma} \in \mathbb{R}^{n_1 \times n_2}$ is a diagonal matrix, whose main diagonal elements are the singular values of the matrix $Y$. Next, we introduce the algorithm of singular value decomposition for third-order tensors.
\begin{algorithm}
\label{al:tsvd}
\caption{Tensor singular value decomposition (tSVD)}
\begin{algorithmic}
\STATE \textbf{Input:} $\boldsymbol{\mathcal{W}} \in \mathbb{R}^{n_1 \times n_2 \times n_3}$
\STATE \textbf{Output:} $\boldsymbol{\mathcal{U}} \in \mathbb{R}^{n_1 \times n_1 \times n_3}, \boldsymbol{\mathcal{S}} \in \mathbb{R}^{n_1 \times n_2 \times n_3}, \boldsymbol{\mathcal{V}} \in \mathbb{R}^{n_2 \times n_2 \times n_3}$
\end{algorithmic}

\begin{algorithmic}[1]
\STATE $\overline{\boldsymbol{\mathcal{W}}} = \operatorname{DFT}(\boldsymbol{\mathcal{W}}, 3)$
\FOR{$i = 1$ to $n_3$}
    \STATE $[\overline{\boldsymbol{U}}^{(i)}, \overline{\boldsymbol{S}}^{(i)}, \overline{\boldsymbol{V}}^{(i)}] = \operatorname{SVD}(\overline{\boldsymbol{W}}^{(i)})$
\ENDFOR
\STATE $\overline{\boldsymbol{\mathcal{U}}} = \operatorname{fold}\left(\left[\overline{\boldsymbol{U}}^{(1)}, \cdots, \overline{\boldsymbol{U}}^{(n_3)}\right]^{T}\right)$, $\overline{\boldsymbol{\mathcal{S}}} = \operatorname{fold}\left(\left[\overline{\boldsymbol{S}}^{(1)}, \cdots, \overline{\boldsymbol{S}}^{(n_3)}\right]^{T}\right)$, $\overline{\boldsymbol{\mathcal{V}}} = \operatorname{fold}\left(\left[\overline{\boldsymbol{V}}^{(1)}, \cdots, \overline{\boldsymbol{V}}^{(n_3)}\right]^{T}\right)$
\STATE ${\boldsymbol{\mathcal{U}}} = \operatorname{IDFT}(\overline{\boldsymbol{\mathcal{U}}}, 3)$, ${\boldsymbol{\mathcal{S}}} = \operatorname{IDFT}(\overline{\boldsymbol{\mathcal{S}}}, 3)$, ${\boldsymbol{\mathcal{V}}} = \operatorname{IDFT}(\overline{\boldsymbol{\mathcal{V}}}, 3)$
\end{algorithmic}
\end{algorithm}

\subsection{Algorithm of T-tSVD}
\label{app:alTtsvd}
For each matrix $Y \in \mathbb{R}^{n_1 \times n_2}$ of rank $r$, it defines the truncated singular value decomposition operator, denoted as $\operatorname{TruncatedSVD}(\cdot)$, as follows: 
\begin{equation}
\operatorname{TruncatedSVD}(\boldsymbol{Y}, \tau):=\boldsymbol{U} \mathcal{D}_{\tau}\left(\boldsymbol{\Sigma}\right) \boldsymbol{V}^T, \quad \mathcal{D}_{\tau}\left(\boldsymbol{\Sigma}\right)=\operatorname{diag}\left(\left\{\operatorname{max}\left(\sigma_i-\tau, 0\right) \right\}_{i=1}^{r}\right),
\end{equation}
where $\tau$ is the truncated threshold. Next, we introduce the algorithm of truncated singular value decomposition for third-order tensors.
\begin{algorithm}
\label{al:Ttsvd}
\caption{Truncated tensor singular value decomposition (T-tSVD)}
\begin{algorithmic}
\STATE \textbf{Input:} $\boldsymbol{\mathcal{W}} \in \mathbb{R}^{n_1 \times n_2 \times n_3}$
\STATE \textbf{Output:} $\boldsymbol{\mathcal{U}} \in \mathbb{R}^{n_1 \times n_1 \times n_3}, \boldsymbol{\mathcal{D}} \in \mathbb{R}^{n_1 \times n_2 \times n_3}, \boldsymbol{\mathcal{V}} \in \mathbb{R}^{n_2 \times n_2 \times n_3}$
\end{algorithmic}

\begin{algorithmic}[1]
\STATE $\overline{\boldsymbol{\mathcal{W}}} = \operatorname{DFT}(\boldsymbol{\mathcal{W}}, 3)$
\FOR{$i = 1$ to $n_3$}
    \STATE $[\overline{\boldsymbol{U}}^{(i)}, \overline{\boldsymbol{D}}^{(i)}, \overline{\boldsymbol{V}}^{(i)}] =  \operatorname{TruncatedSVD}(\overline{\boldsymbol{W}}^{(i)}, \tau)$ 
\ENDFOR
\STATE $\overline{\boldsymbol{\mathcal{U}}} = \operatorname{fold}\left(\left[\overline{\boldsymbol{U}}^{(1)}, \cdots, \overline{\boldsymbol{U}}^{(n_3)}\right]^{T}\right)$, $\overline{\boldsymbol{\mathcal{D}}} = \operatorname{fold}\left(\left[\overline{\boldsymbol{D}}^{(1)}, \cdots, \overline{\boldsymbol{D}}^{(n_3)}\right]^{T}\right)$, $\overline{\boldsymbol{\mathcal{V}}} = \operatorname{fold}\left(\left[\overline{\boldsymbol{V}}^{(1)}, \cdots, \overline{\boldsymbol{V}}^{(n_3)}\right]^{T}\right)$
\STATE ${\boldsymbol{\mathcal{U}}} = \operatorname{IDFT}(\overline{\boldsymbol{\mathcal{U}}}, 3)$, ${\boldsymbol{\mathcal{D}}} = \operatorname{IDFT}(\overline{\boldsymbol{\mathcal{D}}}, 3)$, ${\boldsymbol{\mathcal{V}}} = \operatorname{IDFT}(\overline{\boldsymbol{\mathcal{V}}}, 3)$
\end{algorithmic}
\end{algorithm}

\section{Proofs of Main Theory}

\subsection{Proof of Theorem \ref{them:them3.1} (Interpretability of our optimization objective)}
\begin{proof}
\label{app:pf3.1}
Based on Parseval's theorem and the definition of the tensor nuclear norm (Definition \ref{def:tnn}), we have
\begin{equation}
    \begin{aligned}
    & \tau \|\boldsymbol{\mathcal{W}} - \boldsymbol{\mathcal{W}_{\mathcal{N}}} \|_F^2 + \| \boldsymbol{\mathcal{W}} \|_* \\ = & \frac{\tau}{K} \| \operatorname{DFT}(\boldsymbol{\mathcal{W}} - \boldsymbol{\mathcal{W}_{\mathcal{N}}}) \|_F^2 + \frac{1}{K} \sum_{i=1}^K \| \overline{\boldsymbol{W}}^{(i)} \|_{*} \\
    = & \frac{\tau}{K} \sum_{i=1}^K \| \overline{\boldsymbol{W}}^{(i)}-\overline{\boldsymbol{W}}_{\boldsymbol{\mathcal{N}}}^{(i)} \|_F^2 + \frac{1}{K} \sum_{i=1}^K \| \overline{\boldsymbol{W}}^{(i)} \|_{*},
    \end{aligned}
\end{equation}

\text{where $\overline{\boldsymbol{W}}^{(i)}$ represents the $i$-th frontal slice of the tensor obtained by applying the DFT along the third dimension on $\boldsymbol{\mathcal{W}}$.}
\text{Therefore, we know}
\begin{equation}
\label{eq:6}
    \begin{aligned}
    \min_{\boldsymbol{\mathcal{W}}}\left\{\tau \|\boldsymbol{\mathcal{W}} - \boldsymbol{\mathcal{W}_{\mathcal{N}}} \|_F^2 + \| \boldsymbol{\mathcal{W}} \|_*\right\} \Leftrightarrow \left\{ \min_{\overline{\boldsymbol{W}}^{(i)}} \left(\tau \| \overline{\boldsymbol{W}}^{(i)} - \overline{\boldsymbol{W}}_{\boldsymbol{\mathcal{N}}}^{(i)} \|_F^2 + \| \overline{\boldsymbol{W}}^{(i)} \|_{*} \right)\right\}_{i=1}^K .
    \end{aligned}
\end{equation}

\text{By Lemma \ref{lemma:SVT}, we have}
\begin{equation}
    \label{eq:7}
   \operatorname{TruncatedSVD}(\overline{\boldsymbol{W}}_{\boldsymbol{\mathcal{N}}}^{(i)}, \frac{1}{2\tau}) = \arg \min_{\overline{\boldsymbol{W}}^{(i)}} \left\{ \tau \lVert \overline{\boldsymbol{W}}^{(i)} - \overline{\boldsymbol{W}}_{\boldsymbol{\mathcal{N}}}^{(i)} \rVert_F^2 + \lVert \overline{\boldsymbol{W}}^{(i)} \rVert_* \right\}.
\end{equation}

\text{Now, let us define} $\boldsymbol{\mathcal{\hat{W}}} = \arg \min _{\boldsymbol{\mathcal{W}}}\left\{\tau \|\boldsymbol{\mathcal{W}}-\boldsymbol{\mathcal{W}_{\mathcal{N}}}\|_F^2 + \|\boldsymbol{\mathcal{W}}\|_*\right\} $. Then, based on Eq. (\ref{eq:6}), Eq. (\ref{eq:7}) and algorithm \ref{al:Ttsvd}, we have

\begin{equation}
    \begin{aligned}
        \boldsymbol{\mathcal{\hat{W}}} &= \operatorname{IDFT}\left\{ \operatorname{fold} \left( \left[\operatorname{TruncatedSVD}(\overline{\boldsymbol{W}}_{\boldsymbol{\mathcal{N}}}^{(1)}, \frac{1}{2\tau}), \cdots,  \operatorname{TruncatedSVD}(\overline{\boldsymbol{W}}_{\boldsymbol{\mathcal{N}}}^{(K)}, \frac{1}{2\tau}) \right]^{T} \right) \right\} \\
        &= \operatorname{T-tSVD}(\boldsymbol{\mathcal{W_N}}, \frac{1}{2\tau})
    \end{aligned}
\end{equation}
\end{proof}

\begin{lemma}[SVT Lemma \cite{cai2010SVT}]
\label{lemma:SVT}
For each $\tau \geq 0$ and $\boldsymbol{Y} \in \mathbb{R}^{n_1 \times n_2}$, the truncated singular value decomposition operator $\operatorname{TruncatedSVD}(\cdot)$ obeys
$$
\operatorname{TruncatedSVD}(\boldsymbol{Y}, \frac{1}{2\tau})=\arg \min _{\boldsymbol{X}}\left\{\tau\|\boldsymbol{X}-\boldsymbol{Y}\|_F^2+\|\boldsymbol{X}\|_*\right\} .
$$
\end{lemma}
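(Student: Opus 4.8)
The plan is to recognize this as the classical singular value thresholding (SVT) identity: the minimizer of $\tau\|X-Y\|_F^2 + \|X\|_*$ is obtained by soft-thresholding the singular values of $Y$ at level $\frac{1}{2\tau}$, which by the definition of $\operatorname{TruncatedSVD}$ in Appendix~\ref{app:alTtsvd} (where $\mathcal{D}_{\tau}$ subtracts the threshold and clips at zero) is precisely $\operatorname{TruncatedSVD}(Y,\frac{1}{2\tau})$. First I would note that the objective $h(X) := \tau\|X-Y\|_F^2 + \|X\|_*$ is \emph{strictly} convex, since the Frobenius term is strictly convex and the nuclear norm is convex; hence the minimizer is unique and it suffices to exhibit one matrix satisfying the first-order optimality condition $0 \in \partial h(\hat{X})$, which rearranges to $Y - \hat{X} \in \frac{1}{2\tau}\,\partial\|\hat{X}\|_*$.

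The candidate is $\hat{X} = \operatorname{TruncatedSVD}(Y,\frac{1}{2\tau})$. To analyze it I would write the SVD $Y = U_0\Sigma_0 V_0^T$ and split the singular values into a block $\Sigma_1$ of values strictly exceeding $\frac{1}{2\tau}$, with singular vectors $U_1, V_1$, and a block $\Sigma_2$ of values at most $\frac{1}{2\tau}$, with vectors $U_2, V_2$. Then $\hat{X} = U_1(\Sigma_1 - \frac{1}{2\tau}I)V_1^T$, whose compact SVD has left/right singular vectors $U_1, V_1$ and strictly positive singular values. The key ingredient I would invoke is the standard characterization of the nuclear-norm subdifferential,
\[
\partial\|\hat{X}\|_* = \{\, U_1 V_1^T + W : U_1^T W = 0,\ W V_1 = 0,\ \|W\|_{2} \le 1 \,\},
\]
where $\|\cdot\|_2$ is the spectral norm (the dual of the nuclear norm).

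The verification is then a direct computation. I would write $Y - \hat{X} = \frac{1}{2\tau}U_1 V_1^T + U_2\Sigma_2 V_2^T$ and set $W := 2\tau\, U_2 \Sigma_2 V_2^T$, so that $Y - \hat{X} = \frac{1}{2\tau}(U_1 V_1^T + W)$. Orthogonality of the singular vectors gives $U_1^T U_2 = 0$ and $V_2^T V_1 = 0$, hence $U_1^T W = 0$ and $W V_1 = 0$; and since every singular value in $\Sigma_2$ is at most $\frac{1}{2\tau}$, we obtain $\|W\|_2 = 2\tau\|\Sigma_2\|_2 \le 1$. Therefore $U_1 V_1^T + W \in \partial\|\hat{X}\|_*$, the optimality condition holds, and by uniqueness $\hat{X}$ is the minimizer, which is the claim.

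The main obstacle I anticipate is the careful handling of the subdifferential of the nuclear norm together with the block structure of the SVD — specifically verifying $\|W\|_2 \le 1$, which is exactly where the threshold $\frac{1}{2\tau}$ enters, and treating the boundary case (singular values equal to $\frac{1}{2\tau}$) consistently with the soft-thresholding convention $\max(\sigma_i - \frac{1}{2\tau}, 0)$. Once the subdifferential formula is in hand, the remaining argument is pure orthogonality bookkeeping.
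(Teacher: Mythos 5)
Your proof is correct: the paper does not prove this lemma at all, but simply cites it from \cite{cai2010SVT}, and your argument—uniqueness via strict convexity, then verifying the first-order condition $Y-\hat{X}\in\frac{1}{2\tau}\partial\|\hat{X}\|_*$ using the standard characterization of the nuclear-norm subdifferential and the block split of the SVD at the threshold $\frac{1}{2\tau}$—is precisely the classical argument given in that cited reference (Theorem 2.1 of Cai, Cand\`es, and Shen). Your handling of the boundary case (singular values exactly equal to $\frac{1}{2\tau}$ placed in the zeroed block, consistent with $\max(\sigma_i-\frac{1}{2\tau},0)$) and the bound $\|W\|_2\le 1$ are both sound, so the proposal fills in, correctly and by the same route, the proof the paper delegates to its citation.
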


\begin{remark}
Due to the inherent correlation within the semantic space of clients, if we perform the Fourier transform on parameter tensor along the third dimension, the singular values of its first slice will be much greater than the later ones (see Fig. \ref{curvesingular}). Formally, we can express the assumption as follows,

\label{A1}
$$
\sigma_r\big(\overline{\boldsymbol{{W}}}^{(1)}\big)\gg \sigma_r\big(\overline{\boldsymbol{{W}}}^{(i)}\big)
$$
where $\sigma_r(\cdot)$ is the $r$-th highest singular value of a matrix and $i =2,3,\dots,K$.
\end{remark}

\begin{figure}[h]
\begin{center}
\centerline{\includegraphics[width=0.78\textwidth]{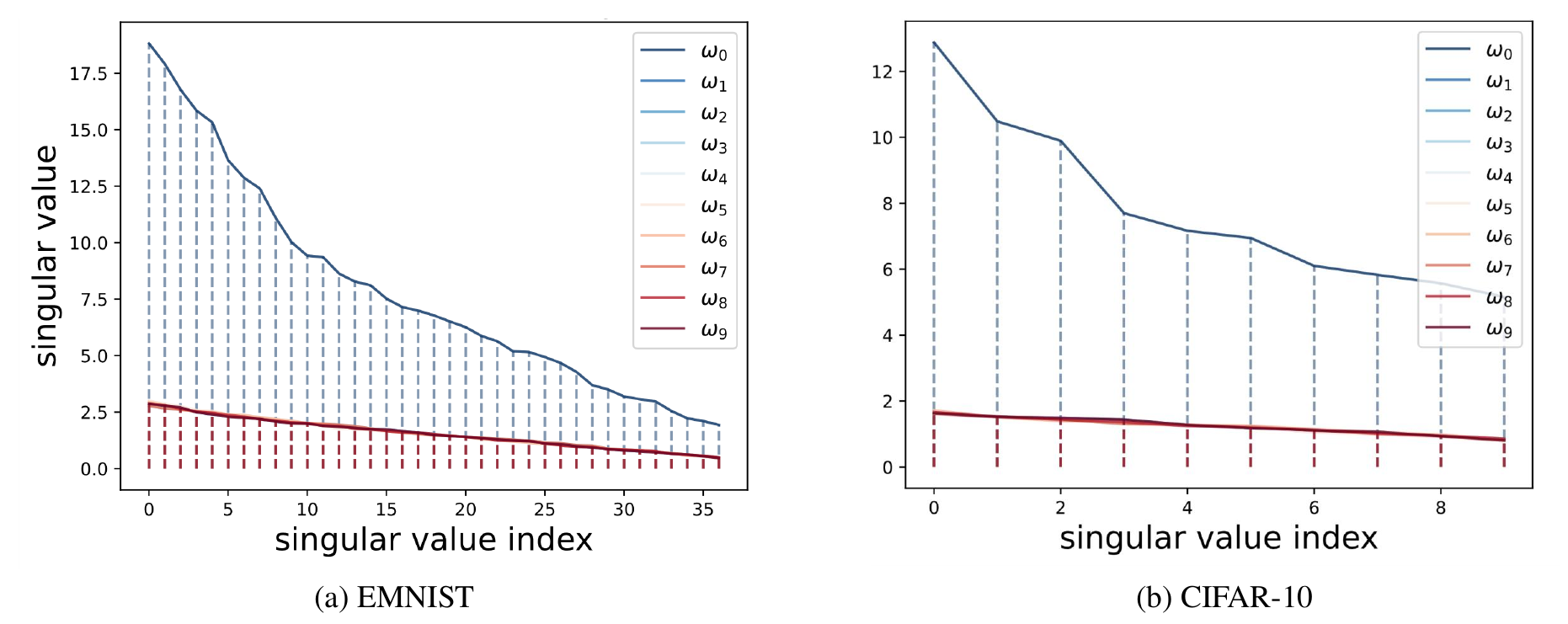}}
\caption{The singular value curves in spectral space for the original noise parameter tensors on two real datasets. \( \omega_0 \) to \( \omega_9 \) represent components from low to high frequency.}
\label{curvesingular}
\end{center}
\end{figure}

\subsection{Proof of Proposition \ref{corollary:1}}
\label{app:C1}
Next, we have the following proof,
\begin{proof}
First, consider $\operatorname{T-tSVD}(\boldsymbol{\mathcal{W}},\sigma_m)$ in Fourier space and we have
\begin{equation}
\overline{\boldsymbol{D}}^{(i)}(j,j)=\max\{\overline{\boldsymbol{S}}^{(i)}(j,j)-\sigma_m,0\}.   
\end{equation}

Since we set $\sigma_m$ to be \textbf{larger than} the highest singular value of $\overline{\boldsymbol{{W}}}^{(i)}$ for $i=2,3,\dots,K$, we have

\begin{equation}
    \overline{\boldsymbol{D}}^{(i)}(j,j)=0
\end{equation}

for all $i=2,3,\dots,K$ and 
\begin{equation}
    \overline{\boldsymbol{D}}^{(1)}(j,j)=\max\{\overline{\boldsymbol{S}}^{(1)}(j,j)-\sigma_m,0\}.
\end{equation}
It has at least one value to be not zero by Remark \ref{A1}.

Thus, when we perform the inverse Fourier transform, we will get
\begin{equation}
\begin{aligned}
\label{eq15}
    [\operatorname{T-tSVD}(\boldsymbol{\mathcal{W}},\sigma_m)]^{(k)}&=\frac{1}{K}\sum_{j=1}^{K}\mathrm{e}^{i\frac{2\pi}{K}(j-1)(k-1)}\overline{\boldsymbol{{U}}}^{(j)}\overline{\boldsymbol{{D}}}^{(j)}\overline{\boldsymbol{{V}}}^{(j)}\\
&=\frac{1}{K}\mathrm{e}^{i\frac{2\pi}{K}(k-1)\cdot 0}\overline{\boldsymbol{{U}}}^{(1)}\overline{\boldsymbol{{D}}}^{(1)}\overline{\boldsymbol{{V}}}^{(1)}\\
&=\frac{1}{K}\overline{\boldsymbol{{U}}}^{(1)}\overline{\boldsymbol{{D}}}^{(1)}\overline{\boldsymbol{{V}}}^{(1)}\\
&=\frac{1}{K}\operatorname{TruncatedSVD}(\overline{\boldsymbol{{W}}}^{(1)},\sigma_m).
\end{aligned}   
\end{equation}

From the other side, we know that
\begin{equation}
\label{eq16}
\overline{\boldsymbol{{W}}}^{(1)}=\sum_{k=1}^K\mathrm{e}^{-i\frac{2\pi}{K}0\cdot(k-1)}\boldsymbol{{W}}^{(k)}\\
=\sum_{k=1}^K\boldsymbol{{W}}^{(k)}=K\boldsymbol{W}.    
\end{equation}

Therefore we obtain that
\begin{equation}
\label{eq17}
[\operatorname{T-tSVD}(\boldsymbol{\mathcal{W}},\sigma_m)]^{(k)}=\frac{1}{K}\operatorname{TruncatedSVD}(K\boldsymbol{W},\sigma_m)=\operatorname{TruncatedSVD}(\boldsymbol{W},\frac{\sigma_m}{K}).    
\end{equation}

Moreover, we appropriately set $\sigma_m=\max_{2 \leq i \leq n}\left\{\sigma_r\big(\overline{\boldsymbol{{W}}}^{(i)}\big)\right\}$.

On the one hand, normally we know that $K \gg \sigma_m$. So, based on Eq. (\ref{eq17}), we have
\begin{equation}
\label{eq18}
\begin{aligned}
 [\operatorname{T-tSVD}(\boldsymbol{\mathcal{W}},\sigma_m)]^{(k)}&=\operatorname{TruncatedSVD}(\boldsymbol{W},\frac{\sigma_m}{K}) \\ &\approx \boldsymbol{W}.     
\end{aligned}
\end{equation}

On the other hand, by Remark \ref{A1}, Eq. (\ref{eq15}) and  Eq. (\ref{eq16}), we have
\begin{equation}
\begin{aligned}
\label{eq19}
    [\operatorname{T-tSVD}(\boldsymbol{\mathcal{W}},\sigma_m)]^{(k)}
&=\frac{1}{K}\operatorname{TruncatedSVD}(\overline{\boldsymbol{{W}}}^{(1)},\sigma_m)\\ &\approx \frac{1}{K}\overline{\boldsymbol{{W}}}^{(1)} = \boldsymbol{W}.
\end{aligned}   
\end{equation}
\end{proof}

\subsection{Proof of Theorem \ref{thm:utility_CEO} (Utility Analysis of Our FedCEO)}
\label{app:pf4.2.1}
Let $w^{\prime}(t)$ denote the global model at $t$-th round in Algorithm \ref{alg:alg2}. Let $w_{k}(t+1)$ denote the inexact solution (defined in \ref{inexactSolution}) of local optimization without DP by $\min_w f_k\left(w ; w^{\prime}(t)\right)$ at $t+1$ round, where $w^{\prime}(t)$ is the initial solution. Let $\mathcal{M}$ denote the DP mechanism we used. Let $\mathcal{F}$ denote our low-rank proximal optimization at the server (i.e. adaptive T-tSVD). Then we have $\hat{w}_{k}(t+1) = \mathcal{F}\left(\left\{\mathcal{M} \left(w_{k}\left(t+1\right)\right)\right\}_{k=1}^{K}\right)$ and define $w^{\prime}\left(t+1\right) = \mathbb{E}_{S_t}\left[\hat{w}_{k}(t+1)\right]$. Moreover, we make the following definitions and assumptions.

\begin{definition}[$\gamma_k^{t+1}$-inexact solution]
\label{inexactSolution}
For a function $f_k\left(w\right)$ and $\gamma_k^{t+1} \in [0,1]$, we define $w_{k}(t+1)$ as a $\gamma_k^{t+1}$-inexact solution of $\min_w f_k\left(w ; w_0\right)$ at ($t+1$) round if 
$$\left\|\nabla f_k\left(w_{k}\left(t+1\right)\right)\right\| \leq \gamma_k^t\left\|\nabla f_k\left(w_0\right)\right\|, $$
where $w_0$ is the initial solution in optimization. \(\gamma_k^{t+1}\) represents the local computational load of the \(k\)-th client at (\(t+1\))-th round.
\end{definition}

\begin{definition}[$B$-local dissimilarity] 
\label{Bdis}
The local empirical risk $f_k$ are $B$-locally dissimilar at $w$ if 
$$\mathbb{E}_k\left[\left\|\nabla f_k(w)\right\|^2\right] \leq \|\nabla f(w)\|^2 B^2. $$ 
Moreover, we define $B(w)=\sqrt{\frac{\mathbb{E}_k\left[\left\|\nabla f_k(w)\right\|^2\right]}{\|\nabla f(w)\|^2}}$ for $\|\nabla f(w)\| \neq 0$ and then $B(w) \leq B$.
\end{definition}

\begin{assumption}[$L_1$-continuous]
\label{A5}
$f_1, \cdots, f_N$ are all $L_1$-continuous: 
$$\forall \boldsymbol{v}, \boldsymbol{w}, \quad \| f_k(\boldsymbol{v}) - f_k(\boldsymbol{w}) \| \leq L_1\|\boldsymbol{v} - \boldsymbol{w} \|. $$ 
\end{assumption}

\begin{assumption}[$L_2$-smooth]
\label{A6}
$f_1, \cdots, f_N$ are all $L_2$-smooth: 
$$\forall \boldsymbol{v}, \boldsymbol{w}, \quad \| \nabla f_k(\boldsymbol{v}) - \nabla f_k(\boldsymbol{w}) \| \leq L_2\|\boldsymbol{v} - \boldsymbol{w} \|. $$ 
\end{assumption}

\begin{assumption}[$\mu$-strongly convex]
\label{A7}
    $f_1, \cdots, f_N$ are all $\mu$-strongly convex: 
    $$\forall \boldsymbol{v}, \boldsymbol{w}, \quad (\nabla f_{k}(\boldsymbol{v})-\nabla f_{k}(\boldsymbol{w}))^T(\boldsymbol{v}-\boldsymbol{w}) \geq \mu \|\boldsymbol{v}-\boldsymbol{w}\|^2. $$ 
\end{assumption}

We can also study the utility guarantee of our FedCEO algorithm under more general non-convex objectives if we introduce the proximal term to $f_k$ like FedProx \cite{li2020fedprox}.
\begin{proof}

At $(t+1)$-th round, considering \textbf{local optimization without DP}:

Let us define ${w}_k^{*}(t+1)=\arg \min _w f_k\left(w ; w^{\prime}(t)\right)$ (i.e. exact optimal solution). So, we know
\begin{equation}
\label{eq20}
    \nabla f_k\left[{w}_k^{*}(t+1)\right] = 0.
\end{equation}

Then, based on Assumption \ref{A7}, we have
\begin{equation}
    (\nabla f_{k}(\boldsymbol{v})-\nabla f_{k}(\boldsymbol{w}))^T(\boldsymbol{v}-\boldsymbol{w}) \geq \mu \|\boldsymbol{v}-\boldsymbol{w}\|^2. 
\end{equation}
Set $\boldsymbol{v}=w^{*}_{k}(t+1), \boldsymbol{w}=w_{k}(t+1)$, based on Eq. (\ref{eq20}) and Definition \ref{inexactSolution}, we have
\begin{equation}
\label{eq22}
   \left\|{w}_{k}^{*}(t+1)-w_{k}(t+1)\right\| \leq \frac{1}{{\mu}}\left\|\nabla f_k\left(w_{k}\left(t+1\right)\right)\right\| \leq \frac{\gamma_{k}^{t+1}}{{\mu}}\left\|\nabla f_k\left(w^{\prime}(t)\right)\right\| .
\end{equation}

Similarly, we know that 
\begin{equation}
\label{eq23}
    \left\|{w}_{k}^{*}(t+1)-w^{\prime}(t)\right\| \leq \frac{1}{{\mu}}\left\|\nabla f_k\left(w^{\prime}\left(t\right)\right)\right\| .
\end{equation}

Combining Eq. (\ref{eq22}) and Eq. (\ref{eq23}), based on the triangle inequality, we have
\begin{equation}
    \left\|{w}_{k}(t+1)-w^{\prime}(t)\right\| \leq \left\|{w}_{k}^{*}(t+1)-w_{k}(t+1)\right\| +  \left\|{w}_{k}^{*}(t+1)-w^{\prime}(t)\right\| \leq \frac{1+\gamma_{k}^{t+1}}{{\mu}}\left\|\nabla f_k\left(w^{\prime}(t)\right)\right\| .
\end{equation}

Next, let us define $\bar{w}(t+1)=\mathbb{E}_k\left[w_{k}\left(t+1\right)\right]$ (an auxiliary variable). Based on the Jensen's inequality and Definition \ref{Bdis}, we have
\begin{equation}
\label{eq25}
    \begin{aligned}
        \left\|\bar{w}(t+1)-w^{\prime}(t)\right\| \leq \mathbb{E}_k\left[\left\|w_{k}(t+1)-w^{\prime}(t)\right\|\right] & \leq \frac{1+\gamma_{k}^{t+1}}{{\mu}} \mathbb{E}_k\left[\left\|\nabla f_k\left(w^{\prime}(t)\right)\right\|\right] \\ & \leq \frac{1+\gamma_{k}^{t+1}}{{\mu}} \sqrt{\mathbb{E}_k\left[\left\|\nabla f_k\left(w^{\prime}(t)\right)\right\|^2\right]} \\& \leq \frac{(1+\gamma_{k}^{t+1})}{{\mu}}B\left\|\nabla f\left(w^{\prime}(t)\right)\right\|.
    \end{aligned}
\end{equation}

According to the local updates from gradient descent, we have
\begin{equation}
\label{eq26}
    w_k(t+1) = w^{\prime}(t)-\eta \nabla f_k \left(w^{\prime}(t)\right).
\end{equation}

Based on the Taylor expansion and Assumption \ref{A6}, we have
\begin{equation}
\label{eq27}
    \begin{aligned}
    f\left(\bar{w}(t+1)\right) & \leq f\left(w^{\prime}\left(t\right)\right)+\left\langle\nabla f\left(w^{\prime}\left(t\right)\right), \bar{w}\left(t+1\right)-w^{\prime}\left(t\right)\right\rangle+\frac{L_2}{2}\left\|\bar{w}\left(t+1\right)-w^{\prime}\left(t\right)\right\|^2 \\
    & \leq f\left(w^{\prime}\left(t\right)\right)-\eta \left\|\nabla f\left(w^{\prime}\left(t\right)\right)\right\|^2+\frac{L_2B^2(1+\gamma)^2 }{2{\mu}^2}\left\|\nabla f\left(w^{\prime}\left(t\right)\right)\right\|^2 \\
    & \leq f\left(w^{\prime}\left(t\right)\right)-\left(\eta-\frac{L_2 B (1+\gamma)^{2}}{2\mu^{2}}\right) \left\|\nabla f\left(w^{\prime}\left(t\right)\right)\right\|^2, 
\end{aligned}
\end{equation}
where $\gamma=\max_{k, t} \{\gamma_{k}^{t+1}\}$.

In practice, we randomly select $K$ clients (i.e. $S_t$) to participate in each round of FL training, rather than involving all clients. Therefore, we consider the following error analysis:
\begin{equation}
\label{eq28}
    e_t = \mathbb{E}_{S_t}\left[f\left(w\left(t+1\right)\right)-f\left(\bar{w}\left(t+1\right)\right)\right], 
\end{equation}
where \(w(t+1)\) is the actual version of \(\bar{w}(t+1)\), taking into account the randomness of client selection.

Based on Assumption \ref{A5}, it is easy to see that $f$ is also $L_1$-continuous. Therefore, we have
\begin{equation}
\label{eq29}
    f\left(w {\left(t+1\right)}\right) \leq f\left(\bar{w} {\left(t+1\right)}\right) + L_1\left\|w {\left(t+1\right)} - \bar{w} {\left(t+1\right)}\right\|
\end{equation}
Similarly, based on the $L_2$- smoothness of $f$, we have

\begin{equation}
    \begin{aligned}
    \label{eq30}
        L_1 & \leq \left\| \nabla f\left(w^{\prime} {\left(t\right)}\right) \right\| + L_2 \max \left( \left\| \bar{w} {\left(t+1\right)} - w^{\prime} {\left(t\right)} \right\|, \left\| w {\left(t+1\right)} - w^{\prime} {\left(t\right)} \right\| \right) \\ & \leq \left\| \nabla f\left(w^{\prime} {\left(t\right)}\right) \right\| + L_2 \left( \left\| \bar{w} {\left(t+1\right)} - w^{\prime} {\left(t\right)} \right\| + \left\| w {\left(t+1\right)} - w^{\prime} {\left(t\right)} \right\| \right)
    \end{aligned}
\end{equation}

Then, based on Eqs. (\ref{eq28}), (\ref{eq29}) and (\ref{eq30}), we have
\begin{equation}
\label{eq31}
    \begin{aligned}
        e_t  & \leq  \mathbb{E}_{S_t}\left[L_1\left\|w\left(t+1\right)\right)-\left(\bar{w}\left(t+1\right)\right\|\right] 
        \\ &\leq \mathbb{E}_{S_t}\left[\left(\left\|\nabla f\left(w^{\prime}{\left(t\right)}\right)\right\|+L_2\left(\left\|\bar{w}{\left(t+1\right)}-w^{\prime}{\left(t\right)}\right\|+\left\|w{\left(t+1\right)}-w^{\prime}{\left(t\right)}\right\|\right)\right) \cdot \left\|w{\left(t+1\right)}-\bar{w}{\left(t+1\right)}\right\|\right] 
        \\ & = \left(\left\|\nabla f\left(w^{\prime}{\left(t\right)}\right)\right\|+L_2\left\|\bar{w}{\left(t+1\right)}-w^{\prime}{\left(t\right)}\right\|\right) \cdot \mathbb{E}_{S_t}\left[\left\|w{\left(t+1\right)}-\bar{w}{\left(t+1\right)}\right\|\right]
        \\ & +L_2 \mathbb{E}_{S_t}\left[\left\|w{\left(t+1\right)}-w^{\prime}{\left(t\right)}\right\| \cdot\left\|w{\left(t+1\right)}-\bar{w}{\left(t+1\right)}\right\|\right].
    \end{aligned}
\end{equation}

Moreover, we know that
\begin{equation}
\label{eq32}
    \begin{aligned}
        & \mathbb{E}_{S_t}\left[\left\|w{\left(t+1\right)}-w^{\prime}{\left(t\right)}\right\| \cdot\left\|w{\left(t+1\right)}-\bar{w}{\left(t+1\right)}\right\|\right]
        \\ & \leq \mathbb{E}_{S_t}\left[\left(\left\|w{\left(t+1\right)}-\bar{w}\left(t+1\right)\right\| +\left\|\bar{w}{\left(t+1\right)}-w^{\prime}{\left(t\right)}\right\| \right) \cdot \left\|w{\left(t+1\right)}-\bar{w}{\left(t+1\right)}\right\|\right]
        \\ & = \left\|\bar{w}{\left(t+1\right)}-w^{\prime}{\left(t\right)}\right\|\cdot \mathbb{E}_{S_t}\left[\left\|w{\left(t+1\right)}-\bar{w}{\left(t+1\right)}\right\|\right]+\mathbb{E}_{S_t}\left[\left\|w{\left(t+1\right)}-\bar{w}{\left(t+1\right)}\right\|^2\right]
    \end{aligned}
\end{equation}

Overall, based on Eq. (\ref{eq31}) and Eq. (\ref{eq32}), we have
\begin{equation}
\label{eq33}
    \begin{aligned}
        e_t & \leq \left(\left\|\nabla f\left(w^{\prime}{\left(t\right)}\right)\right\|+2 L_2\left\|\bar{w}{\left(t+1\right)}-w^{\prime}{\left(t\right)}\right\|\right) \cdot \mathbb{E}_{S_t}\left[\left\|w{\left(t+1\right)}-\bar{w}{\left(t+1\right)}\right\|\right]
        \\ & +L_2 \mathbb{E}_{S_t}\left[\left\|w{\left(t+1\right)}-\bar{w}{\left(t+1\right)}\right\|^2\right]
    \end{aligned}
\end{equation}

Similar to  \cite{li2020fedprox}, we provide the bounded variance of $w_{k}(t+1)$ as follows,
\begin{equation}
\label{eq34}
    \begin{aligned}
    \mathbb{E}_{S_t}\left[\left\|w{\left(t+1\right)}-\bar{w}{\left(t+1\right)}\right\|^2\right] &\leq \frac{1}{K} \mathbb{E}_k\left[\left\|w_k{\left(t+1\right)}-\bar{w}{\left(t+1\right)}\right\|^2\right] \\
    & \leq \frac{2}{K} \mathbb{E}_k\left[\left\|w_k{\left(t+1\right)}-w^{\prime}\left(t\right)\right\|^2\right] \\
    & \leq \frac{2}{K} \frac{(1+\gamma)^2}{{\mu}^2} \mathbb{E}_k\left[\left\|\nabla f_k\left(w^{\prime}\left(t\right)\right)\right\|^2\right]  \\
    & \leq \frac{2}{K} \frac{(1+\gamma)^2 B^2}{{\mu}^2}\left\|\nabla f\left(w^{\prime}\left(t\right)\right)\right\|^2.
\end{aligned}
\end{equation}

Based on Jensen’s inequality and Eq. (\ref{eq34}), we have
\begin{equation}
\label{eq35}
    \begin{aligned}
\mathbb{E}_{S_t}\left[\left\|w{\left(t+1\right)}-\bar{w}{\left(t+1\right)}\right\|\right] &\leq \sqrt{\mathbb{E}_{S_t}\left[\left\|w{\left(t+1\right)}-\bar{w}{\left(t+1\right)}\right\|^2\right]}
\\ & \leq \sqrt{\frac{2}{K}} \frac{(1+\gamma) B}{{\mu}}\left\|\nabla f\left(w^{\prime}\left(t\right)\right)\right\|.
    \end{aligned}
\end{equation}

Combining Eq. (\ref{eq33}) with Eqs. (\ref{eq25}), (\ref{eq34}), and (\ref{eq35}), we have
\begin{equation}
\label{eq36}
    \begin{aligned}
        e_t & \leq  \left[\left\|\nabla f\left(w^{\prime}\left(t\right)\right)\right\| + 2L_2\frac{(1+\gamma)B}{\mu}\left\|\nabla f\left(w^{\prime}\left(t\right)\right)\right\|\right] \cdot  \sqrt{\frac{2}{K}} \frac{(1+\gamma) B}{{\mu}}\left\|\nabla f\left(w^{\prime}\left(t\right)\right)\right\| +  \frac{2L_2}{K} \frac{(1+\gamma)^2 B^2}{{\mu}^2}\left\|\nabla f\left(w^{\prime}\left(t\right)\right)\right\|^2
        \\ & \leq \left[\frac{\sqrt{2}\left(\mu B(1+\gamma) + 2L_2(1+\gamma)^{2}B^{2}\right)}{\sqrt{K}{\mu}^{2}}+\frac{2L_2 (1+\gamma)^2 B^2}{K{\mu}^2 }\right]\left\|\nabla f\left(w^{\prime}\left(t\right)\right)\right\|^2 .
    \end{aligned}
\end{equation}

Based on Eq. (\ref{eq27}) and Eq. (\ref{eq36}), we have
\begin{equation}
\label{eq37}
    \begin{aligned} & \mathbb{E}_{S_t}\left[f\left(w\left(t+1\right)\right)\right] \leq f\left(w^{\prime}\left(t\right)\right) \\ & - \left\{\eta-\frac{L_2B^2 (1+\gamma)^2 }{2 {\mu}^2}- \left[\frac{\sqrt{2}\left(\mu B(1+\gamma) + 2L_2(1+\gamma)^{2}B^{2}\right)}{\sqrt{K}{\mu}^{2}} + \frac{2L_2 (1+\gamma)^2 B^2}{K{\mu}^2 }\right] \right\} \left\|\nabla f\left(w^{'}\left(t\right)\right)\right\|^2 .
\end{aligned}
\end{equation}

Then, we set $m = \eta-\frac{L_2B^2 (1+\gamma)^2 }{2 {\mu}^2}- \left[\frac{\sqrt{2}\left(\mu B(1+\gamma) + 2L_2(1+\gamma)^{2}B^{2}\right)}{\sqrt{K}{\mu}^{2}} + \frac{2L_2 (1+\gamma)^2 B^2}{K{\mu}^2 }\right]$ and we have

\begin{equation}
\label{eq37.5}
\mathbb{E}_{S_t}\left[f\left(w\left(t+1\right)\right)\right] \leq f\left(w^{\prime}\left(t\right)\right)- m \left\|\nabla f\left(w^{'}\left(t\right)\right)\right\|^2 .
\end{equation}

Considering \textbf{DP mechanism} and \textbf{our
low-rank proximal optimization}:

Based on the $L_1$-continuity of $f$, we have
\begin{equation}
\label{eq38}
    f\left(w^{\prime}{\left(t+1\right)}\right) \leq f\left({w} {\left(t+1\right)}\right) + L_1\left\|w^{\prime}{\left(t+1\right)} - {w} {\left(t+1\right)}\right\|
\end{equation}
Thus,
\begin{equation}
\label{eq39}
\begin{aligned}
    \mathbb{E}_{S_t} \left[f\left(w^{\prime}{\left(t+1\right)}\right) \right] - \mathbb{E}_{S_t} \left[f\left({w} {\left(t+1\right)}\right)\right] &\leq L_1 \mathbb{E}_{S_t} \left[\left\|w^{\prime}{\left(t+1\right)} - {w} {\left(t+1\right)}\right\| \right] 
    \\ & = \frac{L_1}{K} \sum_{k \in S_t} \left[\left\|\hat{w}_{k}{\left(t+1\right)} - {w}_{k}{\left(t+1\right)}\right\| \right]
    \\ & \leq \frac{\sqrt{2} L_1}{K} \left\|\hat{\boldsymbol{\mathcal{W}}} - \boldsymbol{\mathcal{W}} \right\|_{F} ,
\end{aligned}
\end{equation}
where $ \boldsymbol{\hat{\mathcal{W}}} = \operatorname{fold}\left(\left[\hat{w}_{1}(t), \cdots, \hat{w}_{K}(t)\right]^{T}\right)$ and $ \boldsymbol{{\mathcal{W}}} = \operatorname{fold}\left(\left[{w}_{1}(t), \cdots, {w}_{K}(t)\right]^{T}\right) \in \mathbb{R}^{d \times h \times K}$.

Next, we know that
\begin{equation} 
\label{eq40}
\left\|\hat{\boldsymbol{\mathcal{W}}} - \boldsymbol{\mathcal{W}} \right\|_{F} 
    \leq     \left[ \left\|\hat{\boldsymbol{\mathcal{W}}} - \boldsymbol{\mathcal{W}_{\mathcal{N}}} \right\|_{F} + \left\|{\boldsymbol{\mathcal{W}_{\mathcal{N}}}} - \boldsymbol{\mathcal{W}} \right\|_{F}\right],
\end{equation}
where $\boldsymbol{{\mathcal{W}_{\mathcal{N}}}} = \operatorname{fold}\left(\left[{w}_{1}^{\prime}(t), \cdots,{w}^{\prime}_{K}(t)\right]^{T}\right) \in \mathbb{R}^{d \times h \times K}$. 

Based on Parseval's theorem and Algorithm \ref{al:Ttsvd}, we have
\begin{equation}
\label{eq41}
    \begin{aligned}    \left\|\hat{\boldsymbol{\mathcal{W}}} - \boldsymbol{\mathcal{W}_{\mathcal{N}}} \right\|_{F} & = \left\|\overline{\hat{\boldsymbol{\mathcal{W}}}} - \overline{\boldsymbol{\mathcal{W}}}_N \right\|_{F} 
    \\ & \leq \sum_{i=1}^{K} \left\|\overline{\hat{\boldsymbol{{W}}}}^{(i)} - \overline{\boldsymbol{{W}}}^{(i)}_N \right\|_{F}
    \\ & = \sum_{i=1}^{K} \left\|\overline{\boldsymbol{U}}^{(i)} \cdot \operatorname{diag}\left(\left\{\operatorname{max}\left(\sigma_j-\tau, 0\right) \right\}_{j=1}^{r}\right)^{(i)} \cdot \overline{\boldsymbol{V}}^{(i)} - \overline{\boldsymbol{U}}^{(i)} \cdot \operatorname{diag}\left(\left\{\sigma_j\right\}_{j=1}^{r}\right)^{(i)} \cdot \overline{\boldsymbol{V}}^{(i)} \right\|_{F}
    \\ & = \sum_{i=1}^{K} \left\| \operatorname{diag}\left(\left\{\operatorname{max}\left(\sigma_j-\tau, 0\right) \right\}_{j=1}^{r}\right)^{(i)} -\operatorname{diag}\left(\left\{\sigma_j\right\}_{j=1}^{r}\right)^{(i)} \right\|_{F}
    \\ & \leq \sqrt{r}\tau_0, \left(\text{as} \quad \tau = {\tau_0}/{K} \right)
    \end{aligned}
\end{equation}
where $\left\{\sigma_j\right\}_{j=1}^{r}$ are the singular values of $\boldsymbol{\mathcal{W}_{\mathcal{N}}}$ and $\tau$ is the truncated threshold of Algorithm \ref{al:Ttsvd}.

Based on our DP with the Gaussian mechanism, we have
\begin{equation}\label{eq42}
    \left\|{\boldsymbol{\mathcal{W}_{\mathcal{N}}}} - \boldsymbol{\mathcal{W}} \right\|_{F} = \mathbb{E}\left[\eta \cdot \|\boldsymbol{\mathcal{N}}\|_F\right] = \sqrt{d}\sqrt{h}C\sigma,
\end{equation}
where $\boldsymbol{\mathcal{N}} \sim \mathcal{N}(0, \boldsymbol{\mathcal{I}}\sigma^{2}C^{2}/K)$ and $\boldsymbol{\mathcal{I}} \in \mathbb{R}^{d \times h \times K}$. 

\textbf{Choice of $\tau$} \quad When the number of clients $K$ is large, and the variance of local Gaussian noise $\mathcal{N}$ is small, we set a smaller truncation threshold with the choice $\tau = {\tau_0}/{K}$. This allows for the retention of more accurate semantic information from individual clients.

Combining Eq. (\ref{eq39}) with Eqs. (\ref{eq40}), (\ref{eq41}), (\ref{eq41}), we have
\begin{equation}
\label{eq44}
    \begin{aligned}
        \mathbb{E}_{S_t} \left[f\left(w^{\prime}{\left(t+1\right)}\right) \right] - \mathbb{E}_{S_t} \left[f\left({w} {\left(t+1\right)}\right)\right] &\leq \frac{\sqrt{2} L_1}{K} \left(\sqrt{r}\tau_0 + \sqrt{d}\sqrt{h}C\sigma \right).
    \end{aligned}
\end{equation}

Overall, based on Eq. (\ref{eq37.5}) and Eq. (\ref{eq44}), we have
\begin{equation}
\label{eq45}
f\left(w^{\prime}\left(t+1\right)\right) \leq f\left(w^{\prime}\left(t\right)\right)- m \left\|\nabla f\left(w^{'}\left(t\right)\right)\right\|^2 + \frac{\sqrt{2} L_1}{K} \left(\sqrt{r}\tau_0 + \sqrt{d}\sqrt{h}C\sigma \right).
\end{equation}

Based on the $\mu$-strongly convexity of $f$, we have
\begin{equation}
\label{eq46}
    f(w(t)) \geq f(w^{\prime}(t))+\nabla f(w^{\prime}(t))^T(w(t)-w^{\prime}(t))+\frac{\mu}{2}\|w(t)-w^{\prime}(t)\|^2.    
\end{equation}

Now, minimize the inequity with respect to $w(t)$ and we have
\begin{equation}
\label{eq47}
    f\left(w^*\right) \geq f(w^{\prime}(t))-\frac{1}{2 \mu}\|\nabla f(w^{\prime}(t))\|^2,
\end{equation}
where $w^*$ is the convergent global model of $w(t)$.

Based on Eq. (\ref{eq45}) and Eq. (\ref{eq47}), we have
\begin{equation}
\label{eq48}
f\left(w^{\prime}\left(t+1\right)\right) \leq f\left(w^{\prime}\left(t\right)\right)- 2\mu m \left(f\left(w^{\prime}\left(t\right)\right) - f\left(w^{*}\right)\right) + \frac{\sqrt{2} L_1}{K} \left(\sqrt{r}\tau_0 + \sqrt{d}\sqrt{h}C\sigma \right),
\end{equation}
and thus
\begin{equation}
\label{eq49}
f\left(w^{\prime}\left(t+1\right)\right) - f\left(w^{*}\right) \leq \left(1- 2\mu m\right) \left(f\left(w^{\prime}\left(t\right)\right) - f\left(w^{*}\right)\right) + \frac{\sqrt{2} L_1}{K} \left(\sqrt{r}\tau_0 + \sqrt{d}\sqrt{h}C\sigma \right).
\end{equation}

Taking \( t \) from $0$ to \( T-1 \) in Eq. (\ref{eq49}), 
\begin{equation}
\label{eq49.0}
f\left(w^{\prime}\left(1\right)\right) - f\left(w^{*}\right) \leq \left(1- 2\mu m\right) \left(f\left(w^{\prime}\left(0\right)\right) - f\left(w^{*}\right)\right) + \frac{\sqrt{2} L_1}{K} \left(\sqrt{r}\tau_0 + \sqrt{d}\sqrt{h}C\sigma \right).
\end{equation}
\begin{equation}
\label{eq49.1}
f\left(w^{\prime}\left(2\right)\right) - f\left(w^{*}\right) \leq \left(1- 2\mu m\right) \left(f\left(w^{\prime}\left(1\right)\right) - f\left(w^{*}\right)\right) + \frac{\sqrt{2} L_1}{K} \left(\sqrt{r}\tau_0 + \sqrt{d}\sqrt{h}C\sigma \right).
\end{equation}
$$
\cdots
$$
\begin{equation}
\label{eq49.T-1}
f\left(w^{\prime}\left(T\right)\right) - f\left(w^{*}\right) \leq \left(1- 2\mu m\right) \left(f\left(w^{\prime}\left(T-1\right)\right) - f\left(w^{*}\right)\right) + \frac{\sqrt{2} L_1}{K} \left(\sqrt{r}\tau_0 + \sqrt{d}\sqrt{h}C\sigma \right).
\end{equation}

and subsequently substituting each resulting expression one by one, we obtain
\begin{equation}
\label{eq50}
f\left(w^{\prime}\left(T\right)\right) - f\left(w^{*}\right) \leq \left(1- 2\mu m\right)^{T} \left(f\left(w^{\prime}\left(1\right)\right) - f\left(w^{*}\right)\right) + \frac{\sqrt{2} L_1}{K} \left(\sqrt{r}\tau_0 + \sqrt{d}\sqrt{h}C\sigma \right) \frac{1-\left(1-2\mu m\right)^{T}}{2\mu m}.
\end{equation}

When selecting a sufficiently large $T$ and satisfying $0 < m < 1/\mu$, we have
\begin{equation}
\label{eq54}
\begin{aligned}
    \epsilon_u = f\left(w^{\prime}\right) - f\left(w^{*}\right) \leq \frac{\sqrt{2} L_1}{K} \left(\sqrt{r}\tau_0 + \sqrt{d}\sqrt{h}C\sigma \right) \frac{1}{2\mu m}.
\end{aligned}
\end{equation}
where $w^{\prime}$ is the convergent global model of $w^{\prime}(t)$.

Overall, with the choice of $m, \tau$ and $T$ specified in the theorem, we have
\begin{equation}
    \label{eq55}
     \epsilon_u = O(\frac{\sqrt{r} + \sqrt{d}}{K}),
\end{equation}
where $r$ is the rank of the parameter tensor after our processing and $d$ is the dimension of input data.

Especially, When we choose an appropriate truncation threshold (regularization coefficient) such that the resulting parameter tensor is \textbf{low-rank}, we have
\begin{equation}
    \label{eq56}
     \epsilon_u = O(\frac{\sqrt{d}}{K}).
\end{equation}
\end{proof}

\subsection{Proof of Theorem \ref{thm:privacy_CEO} (Privacy Analysis of Our FedCEO)}
\label{app:pf4.2}
\begin{proof}
    Let \( \mathcal{M} : \mathcal{D} \rightarrow \mathcal{R} \) denote algorithm \ref{alg:alg1} that satisfies user-level \( (\epsilon, \delta) \)-DP. Based on its definition, we know for any two adjacent datasets $D, D^{\prime} \in \mathcal{D}$ that differ by an individual user's data and all outputs $S \subseteq \mathcal{R}$ it holds that
    \begin{equation}
        \operatorname{Pr}[\mathcal{M}(D) \in S] \leq e^{\epsilon} \operatorname{Pr}\left[\mathcal{M}\left(D^{\prime}\right) \in S\right]+\delta .
    \end{equation}
    By theorem \ref{them:them3.1} and Algorithm \ref{app:alTtsvd}, we know our low-rank proximal optimization is equivalent to the truncated tSVD algorithm, so it is a deterministic function, denoted as $\mathcal{F}: \mathcal{R} \rightarrow \mathcal{R}^{\prime}$.
    
    Fix any pair of neighboring datasets \( \mathcal{D}, \mathcal{D}^{\prime} \) with \( \| \mathcal{D} - \mathcal{D}^{\prime} \| \leq 1 \), and fix any output \( S \subseteq \mathcal{R}^{\prime} \). Let \( Z = \{ z \in \mathcal{R} | \mathcal{F}(z) \in S \} \), we have
    \begin{equation}
        \begin{aligned}
    \Pr[\mathcal{F}(\mathcal{M}(\mathcal{D})) \in S] &= \Pr[\mathcal{M}(\mathcal{D}) \in Z] \\ &\leq \exp(\epsilon) \Pr[\mathcal{M}(\mathcal{D}^{\prime}) \in Z] + \delta \\ &= \exp(\epsilon) \Pr[\mathcal{F}(\mathcal{M}(\mathcal{D}^{\prime})) \in S] + \delta
        \end{aligned}
    \end{equation}
\end{proof}

\section{Experiments Setup and More Results}
\subsection{Local Model and Hyperparameters}
\label{C.1}
\textbf{Models}. In the paper, we employ a two-layer MLP for the EMNIST dataset and a LeNet-5 for the CIFAR-10 dataset. The specific network architectures are as follows.  

\textbf{MLP}: 

    (1) (input layer): Linear(in\_features=$d$, out\_features=64, bias=False)
    
    (2) (dropout layer): Dropout($p$=0.5, inplace=False)
    
    (3) (activation layer): ReLU()
    
    (4) (hidden layer): Linear(in\_features=64, out\_features=num\_classes, bias=False)
   
    (5) (activation layer): Softmax(dim=1)

\textbf{LeNet-5}:

     (1) (conv1): Conv2d(3, 32, kernel\_size=(5, 5), stride=(1, 1))
    
     (2) (pool): MaxPool2d(kernel\_size=2, stride=2, padding=0, dilation=1, ceil\_mode=False)
    
     (3) (conv2): Conv2d(32, 64, kernel\_size=(5, 5), stride=(1, 1))
    
     (4) (pool): MaxPool2d(kernel\_size=2, stride=2, padding=0, dilation=1, ceil\_mode=False)  
    
    (5) (fc1): Linear(in\_features=1600, out\_features=512, bias=True)
    
    (6) (activation layer): ReLU()
    
    (7) (fc2): Linear(in\_features=512, out\_features=512, bias=True)
    
    (8) (activation layer): ReLU()
    
    (9) (fc3): Linear(in\_features=512, out\_features=10, bias=True)



\textbf{Hyperparameters}. 
For each dataset, we uniformly set the global communication rounds \( T \) to 300, the total number of clients \( N \) to 100, and the sampled client number $K$ to 10, resulting in a sampling rate \( p \) of 0.1. Local training employs stochastic gradient descent (SGD) with 30 epochs ($E$), a learning rate $\eta$ of 0.1, and a batch size $B$ of 64. 

Other personalized hyperparameters: initial coefficient $\lambda$, common ratio $\vartheta$, and interval $I$ (searched within a grid range), whose values can be qualitatively guided by practical application scenarios, and their impact on model utility is robust. For $\lambda$, the search range is [55, 100, 5] when $\sigma_g = 1.0$; [0.1, 10, log] when $\sigma_g=1.5$; and [0.01, 1, log] when $\sigma_g = 2.0$. For scenarios with stricter privacy requirements (larger noise), we need to choose a smaller $\lambda$ to achieve a smoother semantic space. For $\vartheta$, the search range is [1.01, 1.10, 0.01]. For $I$, the search range is [10, 15, 20, 25, 30]. Their specific values are listed in Table \ref{hpms}.

\begin{table}[h]
\caption{Our focused hyperparameters for three privacy settings on EMNIST and CIFAR-10.}
\centering
\begin{tabular}{ccccc} 
\cline{1-4}
Dataset    & Model              & Setting          & Hyperparameter &   \\ 
\cline{1-4}
\multirow{3}{*}{EMNIST} & \multirow{3}{*}{MLP-2-Layers}  & $\sigma_g= 1.0$  &   \multicolumn{1}{l}{$\lambda=70, \vartheta=1.08, I=30$}             &   \\
                        & & $\sigma_g= 1.5$ & \multicolumn{1}{l}{$\lambda=0.5, \vartheta=1.04, I=20$}      &   \\
                        & & $\sigma_g= 2.0$ &   \multicolumn{1}{l}{$\lambda=0.03, \vartheta=1.06, I=20$}             &   \\ 
\cline{1-4}
\multirow{3}{*}{CIFAR-10} & \multirow{3}{*}{LeNet-5} & $\sigma_g= 1.0$  &  \multicolumn{1}{l}{$\lambda=85, \vartheta=1.03, I=15$}              &   \\
                        & & $\sigma_g= 1.5$  &  \multicolumn{1}{l}{$\lambda=10, \vartheta=1.07, I=10$}              &   \\
                         & & $\sigma_g= 2.0$ &\multicolumn{1}{l}{    $\lambda=0.6, \vartheta=1.04, I=10$}            &   \\
\cline{1-4}
\end{tabular}
\label{hpms}
\end{table}

Partial parameter analysis results are as follows:

\begin{table}[h]
\centering
\renewcommand{\arraystretch}{1.2}
\caption{Testing accuracy on CIFAR-10 of different intervals $I$ under various privacy settings with three common $\sigma_g$.}
\begin{tabular}{|c|c|c|c|c|c|c|c|}
\hline
Dataset & Model & Setting & \textbf{$I = 10$} & \textbf{$I = 15$} & \textbf{$I = 20$} & \textbf{$I = 25$} & \textbf{$I = 30$} \\
\hline
\multirow{3}{*}{CIFAR-10} & \multirow{3}{*}{{LetNet-5}} & $\sigma_g = 1.0$ & 53.62\% & \textbf{54.16\%} & 53.01\% & 52.81\% & 52.80\% \\
 &  & $\sigma_g = 1.5$ & \textbf{50.00\%} & 49.71\% & 48.90\% & 47.41\% & 47.98\% \\
 &  & $\sigma_g = 2.0$ & \textbf{45.35\%} & 44.37\% & 44.81\% & 43.75\% & 43.20\% \\
\hline
\end{tabular}
\end{table}

\subsection{More Empirical Results}
\label{C.2}
\subsubsection{Model Training
Efficiency}
\label{app: runtime}
To validate the efficiency of our server-side low-rank proximal optimization, we conduct a comparative analysis of the runtime between our FedCEO and other methods in Table \ref{runtime}. We observe that the training efficiency of our method significantly surpasses PPSGD and CENTAUR, approaching the efficiency of UDP-FedAvg without any utility improvement.
\begin{table}[h]
\renewcommand{\arraystretch}{1.2}
\caption{Runtime for our FedCEO and other methods on EMNIST and CIFAR-10 (One NVIDIA GeForce RTX 4090).}
\centering
\begin{tabular}{|c|cccc|} 
\hline
Time / h  & UDP-FedAvg & PPSGD & CENTAUR & FedCEO  \\ 
\hline
EMNIST  & 5.436  & $>$ 24 &  $>$ 24  & 5.445   \\
CIFAR-10 & 3.876  & $>$ 24 &  $>$ 24  & 3.908   \\
\hline
\end{tabular}
\label{runtime}
\end{table}

\subsubsection{Utility for Heterogeneous FL Settings}
\label{app: noniid}
To validate the effectiveness of our model in heterogeneous federated learning \cite{li2020fedprox,sheng2024hete1,FU2025hete2}, we conduct experiments using an MLP as the local model on CIFAR-10 in Table \ref{noniid}. We report the testing accuracy for both iid and non-iid \footnote{\textbf{non-iid} means the data among clients are not independent and identically distributed (\textbf{iid}).} scenarios. It can be observed that our FedCEO maintains state-of-the-art performance. Additionally, in non-iid scenarios, we typically choose a larger $\lambda$ to reduce the global semantic smoothness, preserving more personalized local information.
\begin{table}[h]
\renewcommand{\arraystretch}{1.2}
\caption{Testing accuracy (\%) on CIFAR-10 under iid setting and non-iid setting.}
\centering
\begin{tabular}{ccccccc} 
\hline
Dataset                  & Heterogeneity            & Setting                          & UDP-FedAvg & PPSGD & CENTAUR & FedCEO~  \\ 
\cline{1-7}
\multirow{6}{*}{CIFAR-10} & \multirow{3}{*}{iid}     & $\sigma_g=1.0$ & 40.21\%    & 41.34\%      &  42.17\%       & \textbf{42.76\%}  \\
                         &                          & $\sigma_g=1.5$ & 35.79\%    &  37.28\%     &  38.21\%       & \textbf{39.16\%}  \\
                         &                          & $\sigma_g=2.0$ & 31.62\%    &  33.51\%     &  33.86\%       & \textbf{35.93\%}  \\ 
\cline{2-7}
                         & \multirow{3}{*}{non-iid} & $\sigma_g=1.0$ & 33.09\%    &  34.91\%     & 34.56\%        &  \textbf{36.10\%}        \\
                         &                          & $\sigma_g=1.5$ & 28.92\%    & 31.40\%      & 30.87\%        & \textbf{32.39\%}         \\
                         &                          & $\sigma_g=2.0$ & 26.54\%    &  28.01\%     &  28.11\%       & \textbf{29.13\%}         \\
\hline
\end{tabular}
\label{noniid}
\end{table}

\subsubsection{Utility for other local architecture and dataset}
\label{app: moreExps}
To further validate the applicability of our framework, we conduct experiments using more complex local architectures and other types of datasets, as shown in Table \ref{moreExps}. Specifically, we use AlexNet as the local model on CIFAR-10 and LSTM on the text dataset Sentiment140 (Sent140) \cite{caldas2018leaf}. It can be observed that our FedCEO still maintains SOTA performance.

\begin{table*}[h]
\renewcommand{\arraystretch}{1.2}
\caption{Testing accuracy (\%) on CIFAR-10 and Sent140 under $\delta=10^{-5}$ and various privacy settings with three common $\sigma_g$.} 
\label{moreExps}
\centering
\scalebox{1.0}{
\begin{tabular}{ccccccc} 
\hline
Dataset                  & Model                         & Setting                          & UDP-FedAvg & PPSGD & CENTAUR & FedCEO \\ 
\hline
\multirow{3}{*}{CIFAR-10}  & \multirow{3}{*}{AlexNet} & $\sigma_g=1.0$ & 50.67\%    & 56.58\%      &   58.44\%                 &   \textbf{60.73\%}       \\
                         &                               & $\sigma_g=1.5$ & 41.11\%    & 51.07\%      &  50.20\%         & \textbf{55.49\%}      \\
                         &                               & $\sigma_g=2.0$ & 33.38\%    &    39.93\%   &  43.95\%        & \textbf{49.06\%}          \\ 
\hline
\multirow{3}{*}{Sent140} & \multirow{3}{*}{LSTM}      & $\sigma_g=1.0$ & 60.31\%    &  61.04\%     &   63.33\%        & \textbf{65.70\% 
}  \\
                         &                               & $\sigma_g=1.5$ & 57.62\%    &    57.87\%   &   59.05\%    & \textbf{60.22\% 
                         }  \\
                         &                               & $\sigma_g=2.0$ & 50.94\%    &    55.12\%   &   54.88\%      & \textbf{56.65\%
                         }  \\
\hline
\end{tabular}
}
\end{table*}

\subsubsection{More Details for Privacy Experiments}
\label{app:privacyexps}
In the three federated learning frameworks, we consider a semi-honest adversary at the server, engaging in a gradient inversion attack on the model (gradient) uploaded by a specific client in a given round. This adversarial action is based on the DLG algorithm, and the detailed attack procedure is presented in Figure \ref{app:attack_1} to \ref{app:attack_3}.

\begin{figure}[h]
\begin{center}
\centerline{\includegraphics[width=0.86\textwidth]{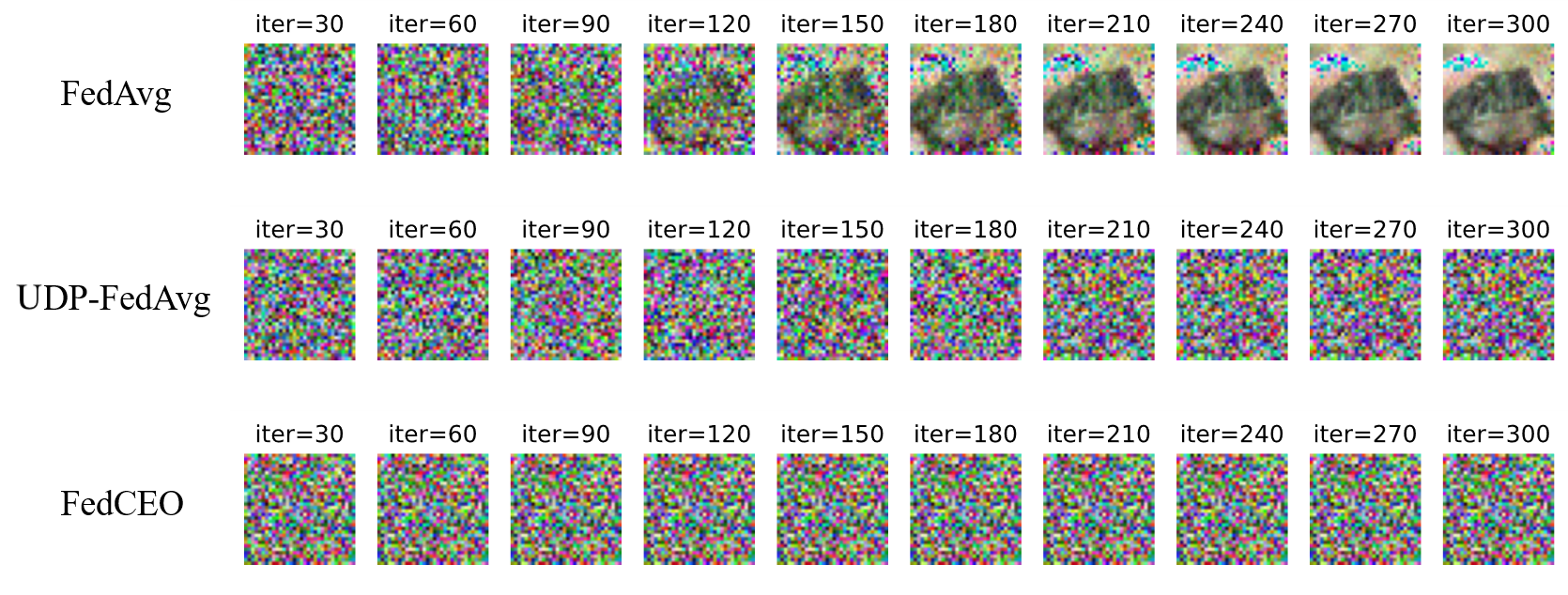}}
\caption{Privacy attack process on three FL frameworks based on DLG. We set the attack target as the image with the index 25 in CIFAR-10. For UDP-FedAvg and our FedCEO, we configure the local model as LeNet with a noise parameter of $\sigma_g=1.0$.}
\label{app:attack_1}
\end{center}
\end{figure}

\begin{figure}[h]
\begin{center}
\centerline{\includegraphics[width=0.86\textwidth]{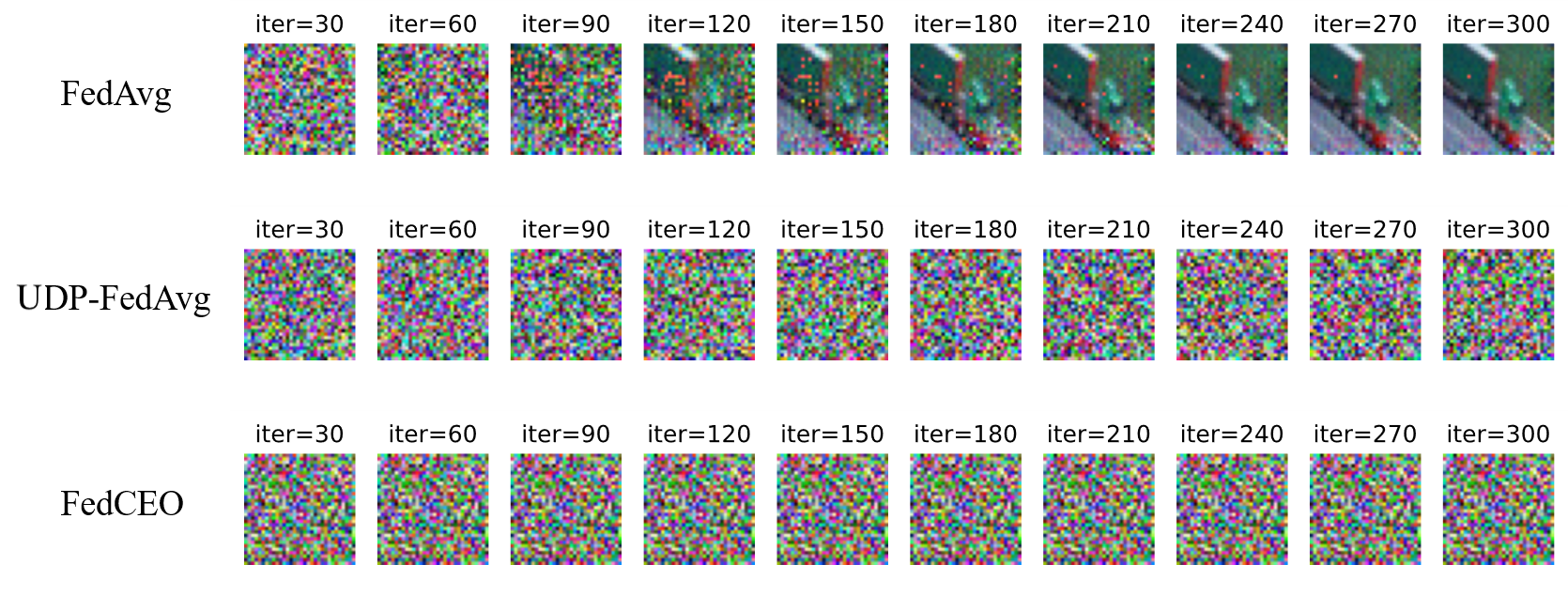}}
\caption{Privacy attack process on three FL frameworks based on DLG. We set the attack target as the image with the index 50 in CIFAR-10. For UDP-FedAvg and our FedCEO, we configure the local model as LeNet with a noise parameter of $\sigma_g=1.5$.}
\label{app:attack_2}
\end{center}
\end{figure}

\begin{figure}[h]
\begin{center}
\centerline{\includegraphics[width=0.86\textwidth]{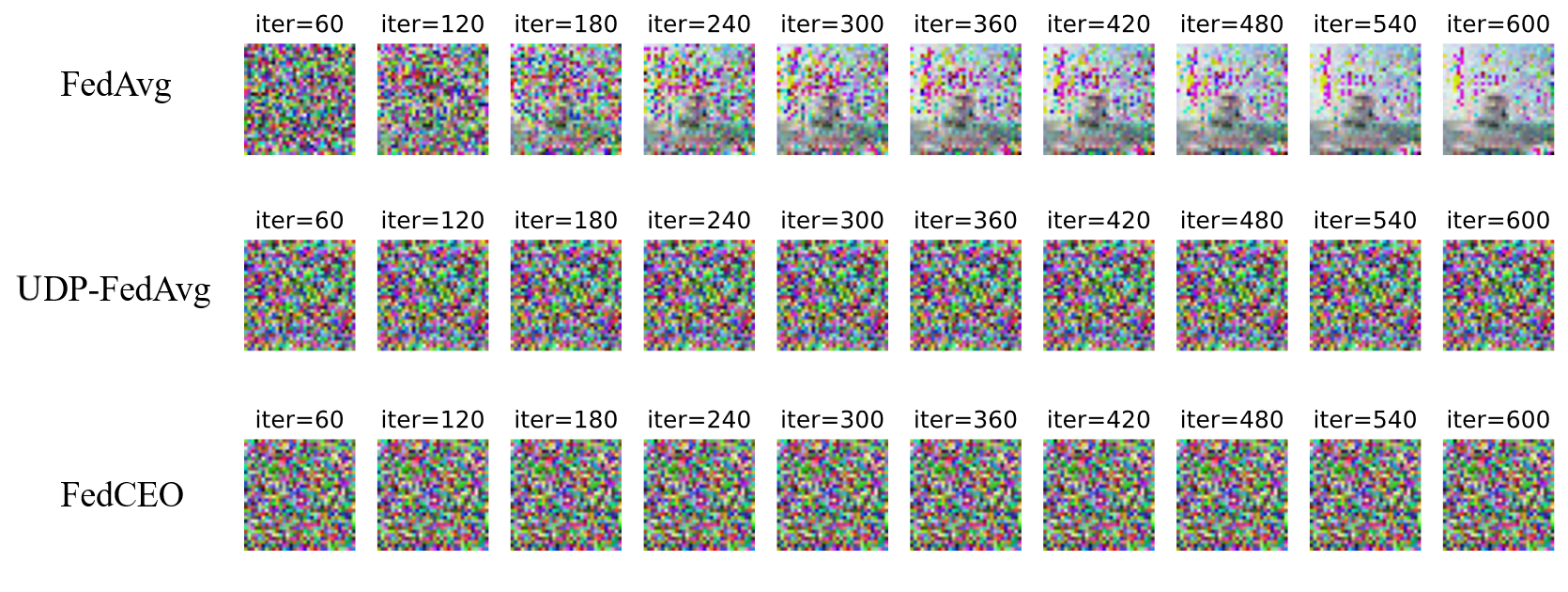}}
\caption{Privacy attack process on three FL frameworks based on DLG. We set the attack target as the image with the index 100 in CIFAR-10. For UDP-FedAvg and our FedCEO, we configure the local model as LeNet with a noise parameter of $\sigma_g=2.0$.}
\label{app:attack_3}
\end{center}
\end{figure}

\end{document}